\pdfoutput=1
\RequirePackage{fix-cm}
\documentclass{article}
\usepackage[T1]{fontenc}
\usepackage[utf8]{inputenc}
\usepackage{textcomp}
\usepackage{arxiv}
\usepackage{amsmath,amssymb,amsthm}
\usepackage{graphicx}
\usepackage{booktabs,array,longtable}
\usepackage[labelfont=bf,labelsep=period,font=normalsize,justification=justified,singlelinecheck=false]{caption}
\usepackage{xcolor}
\usepackage{fancyvrb}
\usepackage{enumitem}
\setlist{topsep=2pt,itemsep=1pt,parsep=1pt}
\usepackage[round,semicolon]{natbib}
\defcitealias{easa2024conceptpaper}{EASA, 2024}
\defcitealias{nist2023airmf}{NIST, 2023}
\defcitealias{faa2017ac20115d}{FAA, 2017}
\defcitealias{milhdbk5j2003}{MIL-HDBK-5J, 2003}
\usepackage[hidelinks]{hyperref}
\newcommand{\procfont}{\ttfamily\fontsize{9.5}{11.4}\selectfont}
\newcommand{\monocell}[1]{\makebox[0.525em][c]{#1}}
\DeclareUnicodeCharacter{03BA}{\monocell{$\kappa$}}
\DeclareUnicodeCharacter{015C}{\monocell{$\hat{S}$}}
\DeclareUnicodeCharacter{03C4}{\monocell{$\tau$}}
\DeclareUnicodeCharacter{2190}{\monocell{$\leftarrow$}}
\DeclareUnicodeCharacter{2192}{\monocell{$\rightarrow$}}
\DeclareUnicodeCharacter{2208}{\monocell{$\in$}}
\counterwithin{figure}{section}
\counterwithin{table}{section}
\newtheoremstyle{arxivdef}{6pt}{6pt}{\normalfont}{}{\bfseries}{.}{0.5em}{\thmname{#1}\thmnumber{ #2}\thmnote{ (#3)}}
\theoremstyle{arxivdef}
\newtheorem{definition}{Definition}
\newtheorem{proposition}{Proposition}
\newtheorem{corollary}{Corollary}
\newcounter{procedure}
\usepackage[section]{placeins}
\usepackage{flafter}
\usepackage{needspace}

\newtheorem*{assumption*}{Assumption}
\newtheorem*{remark*}{Remark}
\hypersetup{pdftitle={Dependable AI-Assisted Engineering: A Formal Framework for AI Participation and Assurance in Safety-Critical Workflows}, pdfauthor={Puxue Tan},
  pdfkeywords={AI-assisted engineering, generative AI, engineering assurance, safety-critical workflows, formalization fidelity, human-in-the-loop engineering}}
\title{Dependable AI-Assisted Engineering: A Formal Framework for AI Participation and Assurance in Safety-Critical Workflows}
\renewcommand{\shorttitle}{Dependable AI-Assisted Engineering}
\date{}
\author{Puxue Tan\\
  School of Mechanical and Aerospace Engineering\\ Queen's University Belfast\\ 50 Malone Road, Belfast, UK\\
  \texttt{p.tan@qub.ac.uk}}
\begin{document}
\maketitle
\begin{abstract}
Generative AI can produce engineering artefacts, but generation alone does not determine whether or how those artefacts should enter safety-critical workflows. This paper develops a formal framework for assigning AI participation and assurance at the level of individual workflow units. Each unit has a participation and assurance record covering its engineering requirement, an approved operational formalization where applicable, the applicable mechanism, fallback where applicable, evidence obligations and the applicable guarantee, plus a deployment-readiness status. The framework distinguishes deterministic verification, statistically calibrated admission, authorized human judgement supported by AI advice, authorized human adjudication of AI-produced artefacts, retained deterministic tool paths and explicit non-participation; these arrangements carry different kinds of guarantee rather than levels on a common scale. The framework also separates formalization fidelity from verifier soundness, provides a staged classification and readiness procedure, and derives conditions for comparing a gated AI-assisted unit with an incumbent process under recurring-population assumptions. We instantiate and apply the framework in an executed 17-unit wing-spar structural-analysis workflow combining deterministically gated AI-generated CAD, retained deterministic computation and human judgement. The AI-generated CAD program passed all 23 deterministic checks and was admitted at the first attempt. Favourable stress magnitudes did not suffice to pass the stress criteria where the predeclared mesh-convergence evidence was insufficient; those criteria were instead referred to engineering judgement. The case demonstrates selective AI participation and explicit evidence handling at unit level; no claim is made of workflow-level dependability, certification, structural safety or productivity.
\end{abstract}
\keywords{AI-assisted engineering \and generative AI \and engineering assurance \and safety-critical workflows \and formalization fidelity \and human-in-the-loop engineering}

\section{Introduction}\label{sec:1}

\subsection{AI participation in safety-critical engineering workflows}\label{sec:1.1}

Generative AI systems increasingly produce engineering artefacts, from analysis scripts and parametric CAD programs to technical documentation. In safety-critical engineering, however, assurance rests on traceable and reviewable evidence relating decisions and deliverables to their requirements. Introducing AI changes who or what produces a deliverable; it does not remove the need for that evidence.

The generation process does not, by itself, establish that an output is acceptable for engineering use. Whether the generator is deterministic or stochastic, capable or error-prone, acceptance depends on what generation does not supply: the requirement the artefact must satisfy, the way that requirement is made operational for checking, the rule by which the artefact is admitted, and the course taken when the evidence is insufficient. Different parts of the same workflow may therefore call for different forms of AI participation and assurance, and the same AI system may suit one part of a workflow but not another.

Placing human review around every AI contribution is a conservative response, but not a complete organizing principle. Authorized human judgement remains necessary where it is the appropriate acceptance mechanism. Elsewhere, a repeatable deterministic check, a calibrated statistical gate or an established deterministic tool may be more appropriate, and some parts of a workflow should involve no AI at all. Uniform review also does not, by itself, record what was checked, against which specification and by whom. The difficulty is not review effort as such, but the absence of an explicit rule for how AI participates in each unit of engineering work and which assurance mechanism governs it.

The motivating concern is dependability, the ability to deliver service that can justifiably be trusted \citep{avizienis2004dependability}. In engineering workflows, that trust rests on evidence attached to individual contributions. The relevant question is therefore not whether AI is globally trustworthy, but, for each unit of work, how AI participates, what assurance that participation requires, and what happens when the required evidence is insufficient. Where an AI-produced artefact is proposed for release, insufficient evidence should lead to rejection, a declared fallback or authorized adjudication rather than implicit acceptance.

\subsection{Beyond binary trust}\label{sec:1.2}

Framing AI adoption as a binary trust decision (whether an AI output can be trusted or not) obscures the fact that different parts of a workflow admit different forms of AI participation, evidence and acceptance. Existing work addresses complementary parts of this problem, from aviation guidance on human–AI authority and learning assurance \citepalias{easa2024conceptpaper} to assurance cases, formal verification, statistical risk control, human oversight and tool qualification (Section~\ref{sec:2}).

The closest conceptual prior work is the architectural compensation discussed by \citet{bloomfield2025aiassurance}, in which strongly assured guards or diverse secondaries compensate for components that can be only weakly assured. The present work shares that concern with controlling weakly assured components. It formalizes how AI participation is assigned within a workflow and what assurance supports it, making the applicable mechanism, fallback, evidence obligations and readiness explicit.

In the sources reviewed, we did not identify a scheme that combines four elements for each workflow unit in an engineering workflow:

\begin{itemize}
\item a decision on whether AI participates, and under what acceptance mechanism an AI-produced artefact may be admitted;
\item a declared course of action when it is not admitted: fallback, rejection or authorized adjudication;
\item an explicit assurance state, consisting of evidence obligations with explicit evidence kinds and a readiness status kept distinct from the unit's classification;
\item explicit treatment of the step from requirement to operational formalization.
\end{itemize}

The gap concerns their integration at the level of individual workflow units, rather than any one element in isolation.

Engineering contrasts show why these elements must be decided unit by unit. An AI-generated CAD program may admit deterministic checks against an approved operational formalization of its geometry, provided the checker's soundness is supported by evidence. Such checking does not, by itself, establish fidelity to the full informal requirement. A stress calculation may already have an established deterministic tool path, and there the question is whether AI should be inserted at all. Concept selection or the acceptance of an unresolved finding may depend on authorized engineering judgement. Where a recurring population, a fixed generator configuration and trusted reference labels exist, calibrated statistical admission becomes a further option. These are different participation and assurance arrangements, not points on a single assurance scale; what unifies them is an explicit rule for how AI participates and what assurance that participation requires. Section~\ref{sec:1.3} states the research questions this raises and the paper's contributions.

\subsection{Contributions and paper organization}\label{sec:1.3}

This paper addresses three linked questions:

\begin{itemize}
\item how AI participation should be allocated across the units of work in an engineering workflow;
\item under what acceptance mechanism an AI-produced artefact may be admitted as a unit's deliverable;
\item how evidence that does not support acceptance should be handled.
\end{itemize}

It makes three contributions.

\textbf{1. A formal framework for AI participation and assurance.} For each relevant unit of work, the framework records the engineering requirement; an approved operational formalization of a declared scope and version, where applicable; the assigned participation and assurance regime; the applicable acceptance or execution mechanism; any declared fallback; the assurance obligations and their evidence; deployment readiness; and the regime guarantee.

The regimes, covering deterministic and statistically calibrated admission, human judgement supported by AI advice, human adjudication of AI-produced artefacts, retained deterministic tool paths and explicit non-participation, differ in participation arrangement, assurance mechanism and guarantee kind, with no scalar dependability score or total ordering. The framework also treats the translation from requirement to operational formalization as an explicit assurance problem: approving a formalization is a recorded act, not a proof of its fidelity.

\textbf{2. A staged classification and readiness procedure.} As part of the formal framework, the procedure separates participation (whether AI should take part in a unit of work at all), assurance (which assurance mechanism is actually available) and readiness (whether the applicable obligations are currently evidenced). It records the current regime, any target regime attainable in principle and its deployment readiness as distinct statuses. Deterministic and statistical admission are assigned only where the verification or calibration mechanism actually exists. Its decision predicates, such as whether AI adds justified value, whether delegation is permitted and whether an approved formalization exists, remain engineering and organizational judgements, recorded with their justification.

\textbf{3. An executed engineering demonstration.} The framework was instantiated and applied to a structural-analysis workflow for a tapered, stiffened wing-spar segment, executed with regimes, criteria and tools fixed before the run. An AI-generated CAD program entered the analysis only through deterministic admission, established deterministic tools were retained, AI-produced drafts were released only through authorized human acceptance, and engineering judgement, supported by AI advice, was applied explicitly where the evidence did not resolve a criterion. Favourable numerical values did not yield a pass where the required convergence evidence was absent, even though every unit's evidence obligations had been satisfied before the run. The study is author-led and bounded in scope; its limitations are set out in Section~\ref{sec:6.2}.

Sections~\ref{sec:2}--\ref{sec:7} present related work, the framework, the classification procedure, the case study, the discussion and the conclusions. The Supplement (Sections S1–S7) provides the case-study evidence and provenance, together with a retrospective audit of the original case-study record, made before the controlled execution.

\section{Background and Related Work}\label{sec:2}

\subsection{AI-generated engineering artefacts}\label{sec:2.1}

Generative models now produce engineering artefacts directly. DeepCAD \citep{wu2021deepcad} generates CAD operation sequences with a Transformer and is evaluated by reconstruction accuracy and the ratio of invalid outputs; CAD-MLLM \citep{xu2024cadmllm} conditions parametric CAD generation on text, images or point clouds and adds topology-quality metrics. \citet{kumar2025generative} use a large language model to write FreeCAD scripts, refined from execution errors and evaluated mainly by whether they run, and \citet{zhang2026llmcad} survey large language models for computer-aided design and the metrics in current use.

These studies demonstrate generation capability and measure it with geometric, topological and executability metrics. They do not, by themselves, establish the rule under which a generated artefact should be admitted into an engineering workflow, nor what should happen when it is not admitted.

\subsection{Assurance, governance and autonomy-level approaches}\label{sec:2.2}

Aviation guidance addresses AI primarily at the level of the application and its machine-learning constituents. EASA's AI Concept Paper Issue 02 \citepalias{easa2024conceptpaper} classifies applications as Level 1 (“assistance to human”) or Level 2 (“human-AI teaming”), with sublevels distinguished by the function allocated to the AI-based system and the authority retained by the end user, and structures learning assurance as a W-shaped process. The Overarching Properties (Intent, Correctness and Innocuity) are properties of a product for which approval is sought. They emerged from an FAA-sponsored initiative, and the report presenting them is not itself official guidance \citep{holloway2019overarching}. The joint EUROCAE WG-114/SAE G-34 standard ED-324/ARP6983, still a draft, defines a machine-learning development lifecycle for ML constituents \citetext{\citeauthor{eurocae_ed324_draft}}. \citet{luettig2024mlaerospace} classify aerospace AI applications by criticality and AI-method complexity.

At the organizational level, the NIST AI Risk Management Framework \citepalias{nist2023airmf} organizes AI risk management into four functions, and ISO/IEC 42001:2023 \citep{iso42001_2023} specifies requirements for an AI management system.

Assurance-case approaches structure arguments and evidence. AMLAS \citep{hawkins2021amlas} provides a stage-based safety-case process for machine-learning components. Assurance 2.0 \citep{bloomfield2021assurance2} strengthens claims–arguments–evidence cases. \citet{lee2026safetycase} propose reusable safety-case templates together with taxonomies of claims, arguments and seven evidence families. Surveys organize the wider ML-safety and ML-certification literature \citep{mohseni2022mlsafety,tambon2022certify}. The approaches surveyed here answer different questions at different levels, and no correspondence between them and the participation and assurance regimes of Section~\ref{sec:3} is claimed.

\textbf{Closest prior work.} \citet{bloomfield2025aiassurance} observe that where only weak assurance can be provided for a component, such as a machine-learning component, the system architecture should compensate, for example with a strongly assured guard performing runtime checking or a diverse secondary. This is closely related to the present concern with how AI outputs enter an engineering process; the present framework assigns participation and assurance unit by unit within an engineering workflow (Section~\ref{sec:1.2}).

\subsection{Deterministic verification and formalization fidelity}\label{sec:2.3}

Formalized properties can in many settings be checked deterministically. SMT solvers such as Z3 \citep{demoura2008z3} support automated checking over formalized constraints in supported logical theories, and SAT-solver unsatisfiability verdicts, for example, can be accompanied by proof certificates checked by independent, in some cases formally verified, checkers \citep{cruzfilipe2017lrat}. Two distinctions matter here.

\textbf{Checking is distinct from generating.} A deterministic checker can decide whether an artefact satisfies a formalized property irrespective of how it was produced. The strength of that decision rests on the checker's soundness, which is supported by proof where available and by engineering evidence otherwise.

\textbf{Soundness relative to a formalization is distinct from fidelity to the requirement.} \citet{zave1997darkcorners} argue that a specification satisfies the requirements only in conjunction with domain knowledge, so meeting a specification does not by itself establish the requirement. The same holds for the chain from requirement to approved operational formalization to verifier: a verifier may be sound with respect to its formalization while the formalization fails to capture part of the informal requirement.

Tools and practices support fidelity without proving it: NASA's FRET translates structured natural-language requirements into temporal logic \citep{giannakopoulou2020fret}, and requirements traceability has long been recognized as a persistent problem \citep{gotel1994traceability}. Formalization fidelity therefore remains an assurance and governance problem in its own right.

\subsection{Statistical acceptance and selective risk}\label{sec:2.4}

Statistical admission builds on selective prediction, in which a predictor may abstain; \citet{chow1970reject} characterized the trade-off between error and rejection. In selective classification, coverage is the fraction of inputs on which a classifier predicts, and selective risk is its error among the inputs it accepts \citep{elyaniv2010selective}. \citet{geifman2017selective} choose an acceptance threshold so that selective risk does not exceed a target with probability at least 1 $-$ $\delta$ over i.i.d. samples, while making coverage as large as possible.

Risk-controlling calibration generalizes this idea. \citet{bates2021rcps} bound the expected loss of a black-box predictor's set-valued output at level $\alpha$ with probability at least 1 $-$ $\delta$. Learn then Test \citep{angelopoulos2025ltt} controls user-chosen risks, including selective error among non-abstained predictions, at stated confidence.

Such guarantees depend on stated sampling assumptions and on the predictor or generator configuration to which the calibration applies, and they carry a sample-size cost: for a fixed acceptance rule with no observed violations among independent trials, exact binomial limits give the familiar zero-numerator bound of roughly 3/n at one-sided 95 \% confidence for n trials \citep{clopper1934binomial,hanley1983zero}.

In the sources reviewed, coverage is maximized or reported rather than guaranteed jointly with selective risk at stated confidence. The joint requirement used in Section~\ref{sec:3}, bounded selective risk together with a minimum coverage at stated confidence, is this paper's own construction on these foundations.

Agreement among models does not itself provide calibrated risk control. \citet{kim2025correlated} find substantially correlated errors across more than 350 language models, with larger, more accurate models showing highly correlated errors even across providers and architectures. \citet{goel2025great} observe that model mistakes become more similar as capability increases. Taken together, these findings motivate treating agreement among models as a signal whose relation to correctness must be calibrated against the relevant formalization, rather than as a correctness guarantee.

\subsection{Human oversight, established tools and model adequacy}\label{sec:2.5}

Where human oversight is used as an acceptance mechanism, its effectiveness depends on how it is exercised. \citet{parasuraman1997humans} distinguish use, misuse (over-reliance), disuse and abuse of automation. \citet{parasuraman2010complacency} review evidence that automation complacency and automation bias affect naive and expert users alike, and that automation bias produces errors of omission and commission that training alone does not prevent. \citet{bansal2021whole} report that AI explanations increased acceptance of AI recommendations regardless of their correctness. Human authority over a decision matters, but recorded human acceptance does not by itself establish that the accepted artefact is correct.

Established deterministic tools raise a different boundary. A tool path with fixed versions, inputs and configuration makes outputs repeatable, but repeatability is not validation, and validation evidence is not, by itself, formal qualification. In airborne software, tool qualification is a defined process (RTCA DO-330/EUROCAE ED-215 \citep{rtca2011do330}, recognized by FAA Advisory Circular 20-115D \citepalias{faa2017ac20115d}) whose required qualification level depends on the tool's potential impact on the software life cycle and on the software level of the item it supports. Whether a computational model is adequate for its intended use is a separate question, addressed by verification and validation practice such as ASME V\&V 10-2019 \citep{asme2019vv10}, a question that retaining an established tool does not settle.

\subsection{Synthesis and research gap}\label{sec:2.6}

The literature offers mature, complementary mechanisms for artefact generation, assurance of machine-learning constituents, organizational risk management, assurance cases, deterministic and proof-checked verification, statistical risk control, the human factors of oversight, and tool qualification and model verification and validation. These address different questions about different objects: the application, the organization, the assurance argument, the artefact or the tool. Read together, they also mark the limits of each mechanism: deterministic checking is sound only relative to a formalization whose fidelity it does not establish (Section~\ref{sec:2.3}); statistical admission requires stated sampling assumptions and a fixed configuration (Section~\ref{sec:2.4}); recorded human acceptance confers authority but not correctness; and a retained tool secures repeatability, not model adequacy (Section~\ref{sec:2.5}). In the sources reviewed, however, we did not identify a scheme that integrates these mechanisms, and their limits, for each workflow unit in an engineering workflow by combining the four elements set out in Section~\ref{sec:1.2}.

Dependability motivates this integration but is not formalized here as a scalar property. Section~\ref{sec:3} formalizes the integration as a framework for AI participation and assurance.

\section{Formal Framework}\label{sec:3}

The framework's unifying principle is \textbf{an explicit AI-participation decision for each unit of engineering work, paired with the assurance required to justify that decision}. Where AI produces an artefact for possible release (Types A, B and \texttt{C[adjudicate]}), an acceptance or admission mechanism is part of that assurance; under \texttt{C[assist]}, AI is advisory and an authorized person produces the deliverable; Type D retains an established deterministic path; and Type E records justified non-participation. The regimes defined below differ in participation arrangement, assurance mechanism and guarantee kind, and share no common scalar measure; dependability (Section~\ref{sec:1.1}) is the motivating system-level property.

\subsection{Activities, Workflow Units, Requirements and Formalizations}\label{sec:3.1}

An engineering \textbf{workflow} $W$ is a finite collection of \textbf{activities}: recognizable units of engineering work such as geometry definition, mesh generation or compliance checking. Activities are the units in which engineers describe their work, but not necessarily the units at which assurance can be meaningfully assigned: a single activity may contain parts whose acceptance can be decided by computation alongside parts that require professional judgement. The framework therefore classifies \textbf{workflow units}.

\begin{definition}[Workflow unit]\label{def:1}

A \emph{workflow unit} $\kappa$ of an activity is a unit of work with:

\begin{itemize}
\item an input space $I_\kappa$;
\item an output (artefact) space $O_\kappa$;
\item a declared \textbf{valid-input domain} $I^{valid}_\kappa\subseteq I_\kappa$;
\end{itemize}

such that its AI-participation and assurance characteristics are sufficiently homogeneous for a single participation and assurance regime to be meaningful. An activity is decomposed into one or more workflow units (Section~\ref{sec:4.3}).
\end{definition}

The valid-input domain records the conditions that inputs are assumed to satisfy: units, provenance, consistency with upstream deliverables, and so on. \textbf{Every guarantee in this framework is conditional on $i\in I^{valid}_\kappa$.} Whether upstream units actually deliver such inputs is outside the scope of a single assurance record (Definition~\ref{def:6}; Section~\ref{sec:4.4}).

\begin{definition}[Requirement and approved formalization]\label{def:2}\leavevmode

\begin{itemize}
\item The \textbf{requirement} $R_\kappa$ of a workflow unit is the condition its deliverable must satisfy, as stated by the relevant regulation, organization, engineering practice or design intent. $R_\kappa$ is generally expressed in natural language. It need not be computable. It need not even be bivalent: for some units, competent experts may legitimately disagree on whether a given output meets it.
\item An \textbf{approved formalization} of $R_\kappa$, where one exists, is a predicate
\[
\hat S_\kappa:\ I_\kappa\times O_\kappa\to\{0,1\},
\]
together with a declared \textbf{scope} (the part of $R_\kappa$ it is intended to capture) and a \textbf{version identifier}, adopted through a recorded approval.
\end{itemize}
\end{definition}

In general $R_\kappa\neq\hat S_\kappa$: the formalization is an operational interpretation of a declared scope of the requirement, adopted for the purpose of checking, not the requirement itself. Whether, for example, inequalities over computed stresses faithfully represent the governing regulation is addressed in Section~\ref{sec:3.8}.

\begin{definition}[Generator]\label{def:3}

A \emph{generator} $G_\kappa$ is an AI system that, given $i\in I_\kappa$, produces an artefact $o\sim G_\kappa(i)$. The generator is treated as a black box and may be stochastic. When a guarantee depends on the generator's behaviour (Type B), the generator is identified by a \textbf{version identifier}, and the guarantee refers to that version only.
\end{definition}

\subsection{Participation and Assurance Records}\label{sec:3.2}

\subsubsection*{Gate, fallback, and released output}

\begin{definition}[Gate, admission event, fallback]\label{def:4}\leavevmode

\begin{itemize}
\item A \emph{gate} for workflow unit $\kappa$ is a rule that decides, for an input $i$ and an AI-produced artefact $o$, whether $o$ is \textbf{admitted} as the unit's deliverable. We write $E$ for the admission event.
\item The \emph{fallback} $F_\kappa$ is the declared path to which execution is routed when the artefact is not admitted. Examples are regeneration, manual execution, or escalation to human adjudication.
\item The \emph{released output} is
\[
O_{rel}(i)=\begin{cases}o & \text{on }E,\\ F_\kappa(i) & \text{on }\neg E,\end{cases}
\]
where $F_\kappa(i)$ may be empty ($\bot$) if the fallback does not produce a deliverable.
\end{itemize}
\end{definition}

Rejection routes execution to the fallback; it does not by itself ensure that a deliverable is produced, or that the fallback's deliverable is correct. The availability and quality of the fallback are properties of the fallback path, to be evidenced separately where they matter.

\subsubsection*{Evidence obligations}

A regime's guarantee holds only under conditions the framework cannot establish on its own, such as verifier soundness, representative calibration data, human authority or use of a tool within its validity domain. These are recorded as \textbf{assurance obligations} with explicit evidence kinds.

\begin{definition}[Assurance obligation]\label{def:5}

An \emph{assurance obligation} is a condition on which a regime's guarantee depends, recorded with its supporting evidence and one of four evidence kinds:

\begin{itemize}
\item \textbf{[Proof]:} a mathematical proof obligation, \emph{discharged} by a proof (e.g., a checked proof certificate or a verified checker);
\item \textbf{[Emp]:} an empirically validated condition, supported by data and carrying statistical uncertainty;
\item \textbf{[Org]:} an organizationally authorized condition, established by a recorded decision of a person or body with the relevant authority;
\item \textbf{[Eng]:} an engineering assumption supported by evidence (review, independent implementation, testing, validation history) but not mathematically proven.
\end{itemize}
\end{definition}

Only [Proof] obligations are \emph{discharged}; the others are \emph{satisfied with stated evidence} (testing a verifier on examples supports but does not prove its soundness). Whether a record's obligations are satisfied determines its \emph{deployment readiness} (Definition~\ref{def:6}), kept separate from which regime a unit requires (Section~\ref{sec:4.2}).

\subsubsection*{The record}

\begin{definition}[Participation and assurance record]\label{def:6}

A \emph{participation and assurance record} (for short, \emph{assurance record}) for workflow unit $\kappa$, denoted $\mathsf{AR}_\kappa$, is
\[
\mathsf{AR}_\kappa=\big\langle\,R_\kappa,\ \hat S_\kappa,\ r_\kappa,\ \mathrm{Mech}_\kappa,\ F_\kappa,\ \mathrm{Obl}_\kappa,\ G_{r_\kappa}\,\big\rangle,
\]
with a version identifier. Its fields are:

\begin{itemize}
\item $R_\kappa$: the requirement.
\item $\hat S_\kappa$: the approved formalization and its scope, where one exists.
\item $r_\kappa\in\{A,\ B,\ C[\text{assist}],\ C[\text{adjudicate}],\ D,\ E\}$: the participation and assurance regime (Type C carries the role attribute assist or adjudicate; Section~\ref{sec:3.5}).
\item $\mathrm{Mech}_\kappa$: the participation and assurance mechanism prescribed by the regime. It is instantiated as a deterministic verifier and admission gate (Type A); a calibrated statistical gate (Type B); authorized human acceptance or adjudication (\texttt{C[adjudicate]}); authorized human production or judgement, with AI material restricted to labelled advice (\texttt{C[assist]}); an established deterministic tool path (Type D); or non-AI execution (Type E).
\item $F_\kappa$: the declared fallback, where applicable.
\item $\mathrm{Obl}_\kappa$: the assurance obligations with their evidence kinds and evidence.
\item $G_{r_\kappa}$: the regime's \textbf{guarantee clause}, defined in Sections~\ref{sec:3.3}--\ref{sec:3.6}.
\end{itemize}
\end{definition}

\textbf{Deployment readiness.} An assurance record has a readiness status
\[
\mathrm{readiness}(\mathsf{AR}_\kappa)\in\{\text{READY},\ \text{NOT READY}\}.
\]
It is READY when every obligation in $\mathrm{Obl}_\kappa$ is satisfied with evidence of an adequate kind, and NOT READY otherwise.

\begin{itemize}
\item The regime $r_\kappa$ states which participation and assurance regime the unit requires.
\item Readiness states whether the obligations needed to operate that regime currently hold. It is a status, not a guarantee.
\item AI participates in a unit only under a READY assurance record in a regime that permits AI participation. Where an AI-produced artefact may become the unit's deliverable, it is admitted only through the mechanism $\mathrm{Mech}_\kappa$ prescribed by that regime.
\item A guarantee clause $G_{r_\kappa}$ is claimed only for a READY record.
\end{itemize}

Figure~\ref{fig:3.1} summarizes the framework: the staged assignment of regimes, the record fields of Definition~\ref{def:6} and their separation from deployment readiness.

\begin{figure}[tbp]
\centering
\includegraphics{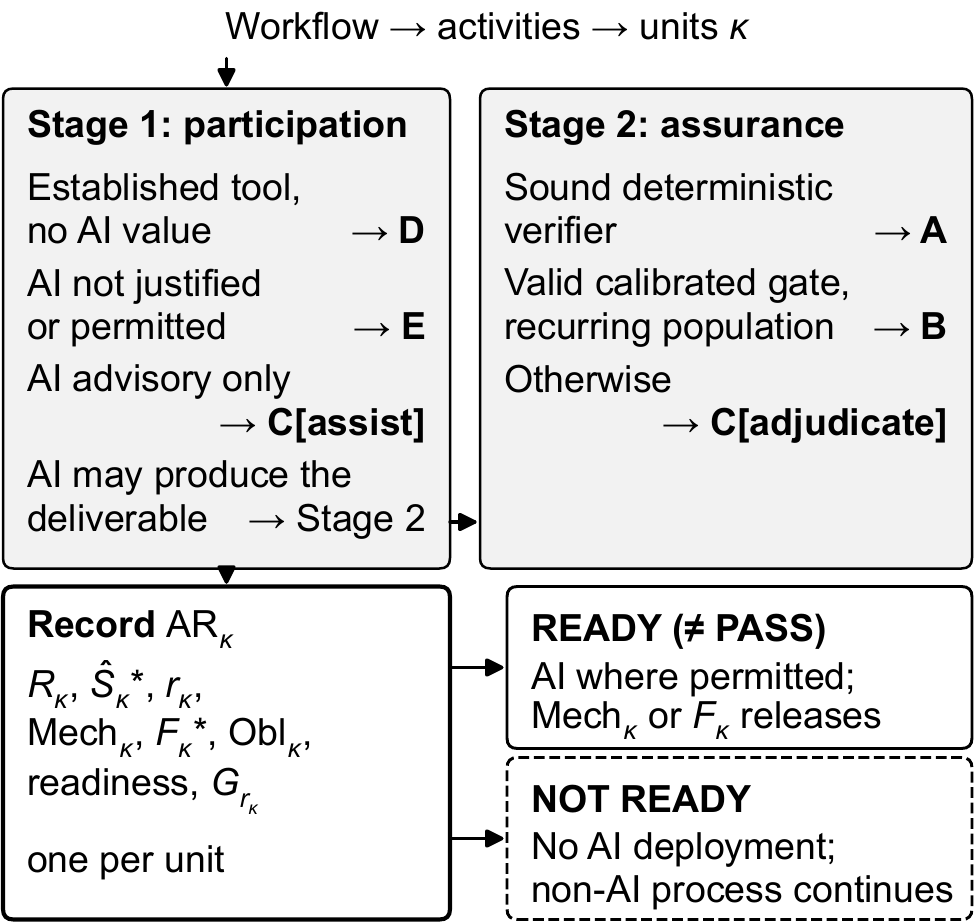}
\caption{Overview of the workflow-unit participation and assurance framework. Regime classification, assurance evidence and deployment readiness are distinct; the regimes carry different guarantee kinds and are not ordered on a scalar assurance scale. Stage 1 (Section~\ref{sec:4.2}) assigns Type D, Type E or C[assist], or passes the unit to Stage 2, which selects the admission rule for units whose deliverable AI may produce. The record box shows the fields of Definition~\ref{def:6} (* where applicable) together with the record's readiness status. Under a READY record, the prescribed mechanism produces, admits or authorizes the deliverable, or a declared fallback produces it; under a NOT READY record (dashed outline), the existing non-AI process produces it (Definition~\ref{def:15}). PASS denotes a favourable study outcome label (Section~\ref{sec:5.6}), which readiness does not imply.}\label{fig:3.1}
\end{figure}

The assurance record is a structured record, not an algebraic object: no refinement or composition operators are defined on it. Its \textbf{definition} (the fields above and the guarantee clause they instantiate), the \textbf{evidence} that its obligations are satisfied (which determines readiness) and the \textbf{mathematical consequences} that follow once the obligations hold are kept apart.

\FloatBarrier
\subsection{Type A — Deterministically Gated Units}\label{sec:3.3}

Some AI-produced artefacts, such as parametric CAD programs whose geometry can be measured, are hard to produce but deterministically checkable. Where the check's soundness does not rest on trust in the generator, AI may participate as an untrusted generator whose artefacts are admitted only if they pass the check.

\begin{definition}[Type A unit and guarantee $G_A$]\label{def:7}

A workflow unit is \textbf{Type A} if:

\begin{itemize}
\item it has an approved formalization $\hat S_\kappa$;
\item a deterministic verifier $V_\kappa:I_\kappa\times O_\kappa\to\{\text{accept},\text{reject}\}$ is implemented and terminates within the unit's time budget;
\item the gate admits an artefact only if $V_\kappa$ accepts it.
\end{itemize}
\end{definition}

The obligations required for $G_A$ are:

\begin{itemize}
\item \textbf{verifier soundness} with respect to the approved $\hat S_\kappa$: $V_\kappa(i,o)=\text{accept}\Rightarrow\hat S_\kappa(i,o)$. This is [Proof] where a proof is available; otherwise [Eng].
\item \textbf{termination} within budget ([Emp]/[Eng]).
\end{itemize}

Further obligations of the record are \textbf{fidelity support} for $\hat S_\kappa$ (Section~\ref{sec:3.8}), which bears on how $G_A$ relates to $R_\kappa$, and a declared fallback.

The guarantee clause is \textbf{deterministic acceptance soundness relative to the approved formalization}: for every $i\in I^{valid}_\kappa$ and every artefact $o$,
\begin{equation}\label{eq:GA}
E(i,o)\ \Rightarrow\ \hat S_\kappa(i,o),
\end{equation}
conditional on verifier soundness.

Clause~\eqref{eq:GA} is a per-instance statement with five boundaries:

\begin{itemize}
\item \textbf{Completeness is not required:} a sound verifier may reject correct artefacts, at a cost in efficiency, not assurance.
\item \textbf{Availability is not guaranteed:} rejections route to the fallback; the generator–verifier acceptance rate should be reported.
\item \textbf{Generator reliability does not enter $G_A$:} conditional on verifier soundness, only artefacts satisfying the approved formalization $\hat S_\kappa$ are admitted.
\item \textbf{Fidelity is not established:} \eqref{eq:GA} concerns $\hat S_\kappa$, not $R_\kappa$ (Section~\ref{sec:3.8}).
\item \textbf{Verifier soundness is itself an obligation} on which the guarantee is conditional.
\end{itemize}

Where soundness cannot be proven, independence between verifier and generator (separate development, diverse tooling, independent review) strengthens the [Eng] evidence for it. Independence is good practice, not a condition of $G_A$: if soundness is proven, \eqref{eq:GA} holds regardless of who produced the verifier.

\textbf{Generation–verification asymmetry.} AI participation can be useful under Type A where an asymmetry holds: generation is difficult or costly relative to a tractable deterministic verification step. This concerns the value of AI participation, not the validity of~\eqref{eq:GA}; it is not part of $G_A$, and Definition~\ref{def:7} does not require it. Where the verifier merely reproduces the unit's own computation, such as evaluating inequalities whose result is the deliverable, an AI generator adds no justified value, and the unit is more accurately treated as Type D (Section~\ref{sec:3.6}) than as Type A.

\emph{Illustration.} In the wing-spar study (Section~\ref{sec:5.3}), an AI-generated CAD program is admitted through a deterministic geometric gate that establishes conformance only within the formalization's declared scope; residual engineering adequacy is a separate human-adjudicated unit (Sections~\ref{sec:3.5} and~\ref{sec:4.3}).

\subsection{Type B — Statistically Gated Units}\label{sec:3.4}

For some workflow units, an approved formalization exists but can be evaluated only on a sample (by expert review or an expensive reference computation), not per instance at deployment. Computable signals (tests, consistency checks, quality metrics, model agreement) can then be combined into a score whose threshold is calibrated against reference evaluations of $\hat S_\kappa$ on a sample from the unit's population. The resulting guarantee is statistical and refers to that population.

Let $D_\kappa$ be a \textbf{declared deployment population} of unit instances, and $G_\kappa$ a \textbf{fixed generator version}. For a gate $g$ with admission event $E_g$, define its \textbf{selective acceptance error} and \textbf{coverage}:
\[
\eta(g)=P_{D_\kappa,G_\kappa}\big(\neg\hat S_\kappa(i,o)\mid E_g\big),\qquad \mathrm{cov}(g)=P_{D_\kappa,G_\kappa}(E_g).
\]
($\eta(g)$ is defined only when $\mathrm{cov}(g)>0$; in~\eqref{eq:GB} below, the event is taken to fail when $\mathrm{cov}(\hat g)=0$.)

\begin{definition}[Calibration procedure]\label{def:8}

A \emph{calibration procedure} $\mathcal K$ maps a calibration sample $\{(i_j,o_j,y_j)\}_{j=1}^m$ to a gate $\hat g=\mathcal K(\cdot)$. Here $i_j$ is drawn from $D_\kappa$, $o_j\sim G_\kappa(i_j)$, and $y_j=\hat S_\kappa(i_j,o_j)$ is obtained from a \textbf{trusted reference labelling procedure}. $\mathcal K$ is \emph{valid for targets} $(\alpha,c,\delta)$ under stated sampling assumptions (for example, independent draws from $D_\kappa$) if, under those assumptions,
\begin{equation}\label{eq:GB}
P_{\text{cal}}\Big(\ \eta(\hat g)\le\alpha\ \ \text{and}\ \ \mathrm{cov}(\hat g)\ge c\ \Big)\ \ge\ 1-\delta,
\end{equation}
where $P_{\text{cal}}$ is taken over the calibration sample. Here:

\begin{itemize}
\item $\alpha\in(0,1)$ is the \textbf{selective-risk target};
\item $c\in(0,1]$ is the \textbf{coverage target};
\item $\delta\in(0,1)$ is the \textbf{calibration confidence parameter}.
\end{itemize}
\end{definition}

\begin{definition}[Type B unit and guarantee $G_B$]\label{def:9}

A workflow unit is \textbf{Type B} if it has an approved formalization $\hat S_\kappa$ and its gate is produced by a calibration procedure valid for stated targets $(\alpha,c,\delta)$, with $c>0$. The obligations are:

\begin{itemize}
\item a declared deployment population $D_\kappa$ ([Eng]);
\item a fixed generator version ([Org]);
\item a trusted reference labelling procedure ([Eng]/[Org]);
\item validity of the calibration procedure under stated assumptions (the procedure's property is [Proof], typically imported from the statistical literature);
\item satisfaction of those sampling assumptions by the calibration and deployment data ([Emp]/[Eng]);
\item monitoring for population shift ([Emp]);
\item reporting of the observed acceptance rate.
\end{itemize}
\end{definition}

The guarantee clause is~\eqref{eq:GB}: \textbf{with confidence at least $1-\delta$ over calibration, the calibrated gate has selective acceptance error at most $\alpha$ and coverage at least $c$ on $D_\kappa$ for generator version $G_\kappa$.}

Several features of this definition are deliberate.

\begin{itemize}
\item \textbf{Selective risk with coverage.} A bound on $P(E\wedge\neg\hat S_\kappa)$ is satisfied trivially by a gate that admits nothing. Clause~\eqref{eq:GB} instead bounds the error \emph{among admitted artefacts}, the quantity relevant to anyone relying on one, and the coverage requirement makes the guarantee non-vacuous and keeps $\eta$ well defined.
\item \textbf{Population- and generator-specific.} The \emph{regime} is determined by the unit and its available assurance mechanism, but a \emph{calibrated Type B guarantee instance} refers to a specific population and generator version, does not transfer, and must be recalibrated after a material change in either.
\item \textbf{Population-level, not per-instance.} Clause~\eqref{eq:GB} does not establish the correctness of any individual artefact.
\item \textbf{One-off units.} If no meaningful recurring population can be declared, the calibration predicate $P_{calib}$ (Section~\ref{sec:4.1}) is false and the unit cannot be Type B; the framework does not introduce subjective probabilities to make a statistical guarantee apply to a one-off decision. Types A, C, D and E do not depend on such a population.
\item \textbf{Agreement is not a guarantee.} Model agreement, tests and metrics may feed the score but are not a statistical guarantee: model errors can be correlated \citep{kim2025correlated,goel2025great}, and agreement says nothing quantitative about correctness unless calibrated against $\hat S_\kappa$. Correlated errors among the combined signals do not by themselves invalidate a valid calibration, but they may reduce the coverage attainable at a given risk target $\alpha$.
\end{itemize}

\textbf{Acceptable calibration methods.} The framework prescribes no particular procedure. Acceptable classes, provided~\eqref{eq:GB} holds under assumptions valid for the unit, include selective classification with guaranteed risk \citep{geifman2017selective}, finite-sample binomial confidence procedures applied to a gate fixed in advance \citep{clopper1934binomial}, Learn-then-Test-type risk control \citep{angelopoulos2025ltt} extended with a coverage constraint, and other statistically valid schemes.

\begin{remark*}[illustrative calibration effort]

In the simplest special case, the gate is fixed in advance (not selected using the calibration data), the admitted calibration instances are independent and each violates $\hat S_\kappa$ with probability $\eta$, and none of the $k$ admitted instances violates $\hat S_\kappa$. A one-sided exact binomial bound at confidence $1-\delta$ then gives $\eta\le1-\delta^{1/k}\approx\ln(1/\delta)/k$, so establishing $\eta\le\alpha$ requires about $k\approx\ln(1/\delta)/\alpha$ admitted instances: for $\delta=0.05$, $k=299$ for $\alpha=10^{-2}$ and $k=2995$ for $\alpha=10^{-3}$. The total sample is larger (roughly $k$ divided by the coverage), the coverage clause needs its own confidence bound, and a threshold selected on the same data changes the requirement. The example is not a general requirement, but \textbf{rigorous Type B assurance requires a reusable operational population and sufficient labelled calibration evidence, and more stringent risk targets generally require larger calibration sets.}
\end{remark*}

\emph{Illustration (hypothetical).} AI-drafted summaries of standardized inspection records, recurring across many parts, are checked against a formalized list of mandatory fields and value ranges, some confirmable only by an engineer. Engineer review of a calibration sample provides reference labels; the calibrated gate admits summaries scoring above its threshold and routes the rest to review. No Type B instance is claimed for the wing-spar study.

\subsection{Type C — Human Judgement and Adjudication}\label{sec:3.5}

For many workflow units, the deliverable or its acceptance cannot be delegated to computation: the requirement may not be formalizable or may admit legitimate expert disagreement, or regulation and professional responsibility may require a named person to decide. AI may still participate, by supplying advisory material or by drafting the artefact; Type C covers both arrangements through one regime with a role attribute.

\begin{definition}[Type C unit and guarantee $G_C$]\label{def:10}

A workflow unit is \textbf{Type C} if its deliverable or decision is released only through a recorded act of an identified, authorized person. The role attribute is one of:

\begin{itemize}
\item \textbf{\texttt{C[assist]}:} AI supplies advisory material (analysis, options, summaries, flagged concerns), and the authorized person produces the deliverable or decision. AI material is retained only as labelled advisory material and is never itself admitted as the deliverable.
\item \textbf{\texttt{C[adjudicate]}:} AI produces a draft artefact, and the draft is released as the deliverable only upon the authorized adjudicator's explicit acceptance, possibly after revision.
\end{itemize}
\end{definition}

The obligations are:

\begin{itemize}
\item the identity and authority of the authorized person ([Org]);
\item a recorded act: acceptance of the AI-produced artefact (\texttt{C[adjudicate]}), or production of the deliverable or decision (\texttt{C[assist]}) ([Org]);
\item recorded AI provenance of every contributing artefact or advisory item ([Org]);
\item a recorded rationale and the evidence consulted ([Org]/[Eng]).
\end{itemize}

These obligations concern readiness, not classification: a unit requiring human judgement or adjudication is Type C even before an authorized process exists, but its record is then NOT READY, and no AI-produced artefact or advisory material is used for it.

The guarantee clause is \textbf{authorized human release}: under \texttt{C[adjudicate]}, no AI-produced artefact of the unit is released as its deliverable without the recorded acceptance of an authorized adjudicator; under \texttt{C[assist]}, the deliverable or decision is produced and recorded by an authorized person, with AI material present only as labelled advice. In both cases, all AI contributions are identifiable.

$G_C$ is procedural. It makes no claim about the probability that the released deliverable is correct, or how that probability compares with unaided adjudication. AI assistance may improve decisions or degrade them, for example through automation bias \citep{parasuraman2010complacency} or anchoring on AI recommendations \citep{bansal2021whole}; which occurs is an empirical question outside the formal guarantee. Mitigations such as independent-first judgement can be recorded as [Eng] obligations, but $G_C$ does not depend on them. Human authority does not imply human correctness; the framework guarantees the former only.

\subsection{Types D and E — Retained Deterministic Paths and Non-Participation}\label{sec:3.6}

Types D and E are participation decisions in which AI-produced artefacts do not contribute to the deliverable.

\begin{definition}[Type D unit and guarantee $G_D$]\label{def:11}

A workflow unit is \textbf{Type D} if two conditions hold:

\begin{itemize}
\item an \textbf{established deterministic tool path} already produces its deliverable, with stated validation or assurance evidence and a declared validity domain;
\item AI participation would add no justified value beyond that path.
\end{itemize}
\end{definition}

Type D is thus a decision to retain an established deterministic path, whether or not AI could assist the unit; it is not a form of verified AI.

The obligations are:

\begin{itemize}
\item the stated validation evidence for the tool path ([Eng]; plus [Org] where a formal qualification record exists);
\item operation within the declared validity domain ([Eng]; the domain check may itself be a separate Type A or D unit).
\end{itemize}

The guarantee clause is \textbf{retention of the established path}: the deliverable is produced by the established tool path on inputs within its validity domain, and no AI-produced artefact substitutes for its output.

Four distinct things are easily conflated:

\begin{enumerate}
\item \textbf{Deterministic solver correctness:} the tool computes what it is specified to compute (e.g., a finite-element solver solves the discretized equations to tolerance).
\item \textbf{Validation evidence:} benchmarks, comparisons with analytical or experimental results, and operational history supporting (1).
\item \textbf{Formal tool qualification:} a recorded qualification under an applicable standard (e.g., RTCA DO-330/EUROCAE ED-215), existing only where actually carried out.
\item \textbf{Engineering model adequacy:} whether discretization, idealization, boundary conditions and analysis type suit the engineering question.
\end{enumerate}

$G_D$ retains the established tool's deterministic correctness, supported by the stated validation evidence and, where available, formal tool qualification. It does \textbf{not} establish engineering-model adequacy; mesh adequacy, idealization and boundary conditions are workflow units in their own right, with their own records.

\begin{definition}[Type E unit and guarantee $G_E$]\label{def:12}

A workflow unit is \textbf{Type E} if AI does not participate in it and the retained-tool rationale of Type D does not apply. Typically, this means the unit is performed by people. The guarantee clause is \textbf{non-delegation}: no AI-produced artefact contributes to the unit's deliverable. The obligation is an auditable record that no AI artefact contributed ([Org]).
\end{definition}

Type E is explicit, justified non-participation, defined narrowly: where AI supplies useful advisory material but a human produces the deliverable, the unit is \texttt{C[assist]}, not Type E.

\subsection{Relations Among the Participation and Assurance Regimes}\label{sec:3.7}

The five guarantee clauses are of different kinds:

\begin{itemize}
\item $G_A$ is a per-instance logical implication;
\item $G_B$ is a population-level statistical statement with a coverage requirement;
\item $G_C$ is a procedural property of release;
\item $G_D$ and $G_E$ are participation boundaries.
\end{itemize}

They are not points on a single scale, and \textbf{no total ordering among the regimes is claimed}.

One comparison is meaningful. For units with the \textbf{same} approved formalization $\hat S$, a Type A gate with a sound verifier admits only artefacts satisfying $\hat S$. Its selective acceptance error is therefore zero on every population within $I^{valid}$ on which it admits anything:
\[
P(\neg\hat S\mid E)=0.
\]
Hence it satisfies the risk part of $G_B$ for every $\alpha$, but not necessarily the coverage part, since a sound verifier may reject most artefacts. The comparison does not extend across different formalizations (a Type A guarantee over a narrow $\hat S$ is not stronger than a Type B guarantee over a broad one), nor to Type C, whose guarantee concerns authority and traceability rather than error rates. Types D and E are boundaries of AI participation, not levels of assurance.

\subsection{Formalization Fidelity}\label{sec:3.8}

The guarantees of Types A and B are stated relative to an approved formalization. Their bearing on the real requirement depends on two chains that meet only at admission:
\begin{gather*}
R_\kappa\ \xrightarrow{\ \text{formalized as}\ }\ \hat S_\kappa\ \xrightarrow{\ \text{implemented by}\ }\ V_\kappa,\\
i\ \xrightarrow{\ G_\kappa\ }\ o\ \xrightarrow{\ \text{evaluated by }V_\kappa\ }\ E\ \text{(admit) or}\ \neg E\ \text{(reject)}.
\end{gather*}
The \textbf{requirement} $R_\kappa$ is interpreted as $\hat S_\kappa$; $\hat S_\kappa$ is implemented as a verifier $V_\kappa$ (Type A), or through the reference labelling against which a gate is calibrated (Type B, where the calibrated gate takes the place of $V_\kappa$). Separately, the generator produces the artefact $o$, which that mechanism evaluates.

Two distinct gaps separate the artefact's admission from the requirement:

\begin{enumerate}
\item \textbf{Requirement $\rightarrow$ formalization.} Is $\hat S_\kappa$ a sufficiently faithful operational interpretation of the declared scope of $R_\kappa$? Regulatory text may be ambiguous, implicit conventions (such as the load level at which a criterion applies) omitted, a threshold mis-transcribed, or the scope narrower than intended.
\item \textbf{Formalization $\rightarrow$ implementation.} Does $V_\kappa$ soundly implement $\hat S_\kappa$? This is the verifier-soundness obligation of Type A. For Type B, it is the correctness of the reference labelling.
\end{enumerate}

The second gap can sometimes be closed by proof. The status of the first depends on the form of the authoritative requirement.

\begin{itemize}
\item \textbf{Informal requirements.} When $R_\kappa$ is regulatory text, design intent or engineering judgement expressed in natural language, the fidelity of $\hat S_\kappa$ cannot, in general, be established by proof in the sense in which verifier soundness can. It is supported by engineering evidence and authorized interpretation. This is the common case in regulated engineering, and it is where formalization requires explicit assurance.
\item \textbf{Requirements that are already formal.} When the authoritative source requirement is itself formal or machine-checkable (for example, a normative data schema or a formally specified interface constraint), the fidelity gap may collapse if $\hat S_\kappa$ is that requirement, or it may admit formal refinement or equivalence evidence, which would be a [Proof] obligation.
\end{itemize}

For informal requirements, the framework distinguishes the \textbf{approval} of a formalization (a governance and engineering-review act, recorded as an [Org] obligation and supported by [Eng] evidence such as clause-by-clause traceability, test cases and independent re-derivation) from its \textbf{actual fidelity}, which may remain imperfect after approval. \textbf{Approval of $\hat S_\kappa$ is not proof that $\hat S_\kappa$ represents $R_\kappa$}, even when the requirement is formal, unless formal equivalence or refinement evidence is actually provided.

\begin{definition}[Formalization task]\label{def:13}

For each workflow unit for which a formalization is produced, the \emph{formalization task} $\tau_{\mathrm{form}}(\kappa)$ is defined by:

\begin{itemize}
\item input: $R_\kappa$ and its relevant context;
\item output: a formalization $\hat S_\kappa$ with declared scope (and, where applicable, a verifier $V_\kappa$);
\item requirement: that $\hat S_\kappa$ adequately represents the declared scope of $R_\kappa$.
\end{itemize}
\end{definition}

The formalization task is itself classified by the procedure of Section~\ref{sec:4}. For natural-language regulatory or engineering requirements, final acceptance of fidelity rests on authorized human judgement (\texttt{C[adjudicate]} where an AI-drafted formalization may be adopted, otherwise \texttt{C[assist]} or Type E), since no computation can decide it. Deterministic sub-checks (well-formedness of an encoding, consistency of units, agreement of a threshold with a cited table, coverage of a finite clause list) may be separated and classified as Type A or D where justified. AI-assisted formalization is thus an explicit, classified task, not silently absorbed into a Type A gate.

A change to an approved formalization, including one decided in a downstream unit, is a change to the output of $\tau_{\mathrm{form}}(\kappa)$. It produces a new version of $\hat S_\kappa$, and every gate that depends on it is re-evaluated.

For the comparison of Section~\ref{sec:3.9}, the residual effect of imperfect fidelity is captured by the \textbf{specification-fidelity residual}
\[
\gamma=P\big(\hat S_\kappa(i,o)\wedge\neg R_\kappa(i,o)\mid E\big),
\]
the probability that an admitted artefact satisfies the formalization but not the requirement. It is defined only where $R_\kappa$ is bivalent, may be difficult or impossible to estimate exactly, and is used only in the optional comparison of Section~\ref{sec:3.9}. Review and approval of $\hat S_\kappa$ do not establish $\gamma=0$; the case $\gamma=0$ is an ideal conditional assumption.

\subsection{Gated-Fallback Comparison with an Incumbent Process}\label{sec:3.9}

The guarantees of Sections~\ref{sec:3.3}--\ref{sec:3.6} make no reference to how the unit was performed before AI was introduced. This section asks when a gated AI-assisted process is at least as likely to produce a deliverable meeting the requirement as the process it supplements, and gives a lower bound with the exact identity behind it. The result is optional, applies only under the conditions stated, and concerns a single workflow unit, not a workflow.

The comparison requires a meaningful operational probability model: a declared population of recurring instances over which the probabilities below are defined. It does not apply automatically to a one-off engineering decision, and the framework does not introduce subjective probabilities to make it apply.

\textbf{Setting.} Consider a workflow unit $\kappa$ (subscripts omitted below) under the following conditions:

\begin{itemize}
\item \textbf{(S1)} The requirement $R$ is bivalent: $R:I\times(O\cup\{\bot\})\to\{0,1\}$, with $R(i,\bot)=0$ (no deliverable does not meet the requirement).
\item \textbf{(S2)} A probability space is defined on which the following are jointly defined: an instance $i\sim D$ from a declared operational population; an AI artefact $o\sim G(i)$; the admission event $E$ of the gate; the output $F(i)$ of the actual fallback; and the output $M(i)$ of an \textbf{incumbent process} $M$, meaning the manual or tool-based process that would perform the unit without AI. $M(i)$ is a counterfactual: it is the output the incumbent would have produced on instance $i$.
\item \textbf{(S3)} $P(E)>0$.
\item \textbf{(S4)} Rejected instances are routed to the fallback: $O_{rel}=o$ on $E$ and $O_{rel}=F(i)$ on $\neg E$.
\end{itemize}

The actual fallback $F$ and the counterfactual incumbent $M$ are distinct objects. They are linked only by the following assumption.

\begin{assumption*}[fallback-performance equivalence]

\[
P\big(R(i,F(i))\wedge\neg E\big)=P\big(R(i,M(i))\wedge\neg E\big).
\]
When $P(\neg E)>0$, this is equivalent to $P(R(i,F(i))\mid\neg E)=P(R(i,M(i))\mid\neg E)$: on the rejected instances, the actual fallback produces a deliverable meeting the requirement with the same probability as the incumbent would have unaided.
\end{assumption*}

This is exactly the assumption needed to cancel the rejected-region terms, and nothing stronger is assumed. It concerns correctness performance on the rejected region, not causal non-interference. For a human fallback, independent-first review or withholding the rejected artefact can provide supporting evidence ([Org]/[Eng]) but does not prove the absence of anchoring; the framework does not model human–AI causal interaction.

\textbf{Quantities.} Define:
\[
d_{gate}=P\big(R(i,O_{rel})\big),\qquad d_M=P\big(R(i,M(i))\big),
\]
\[
\eta=P\big(\neg\hat S(i,o)\mid E\big),\qquad \gamma=P\big(\hat S(i,o)\wedge\neg R(i,o)\mid E\big),\qquad q_M(E)=P\big(\neg R(i,M(i))\mid E\big).
\]

\begin{itemize}
\item $\eta$ is the gate's selective acceptance error.
\item $\gamma$ is the specification-fidelity residual of Section~\ref{sec:3.8}.
\item $q_M(E)$ is the \textbf{incumbent's error rate on the sub-population of instances the gate admits}. It is not the incumbent's overall error rate.
\item $\omega=P\big(\neg\hat S(i,o)\wedge R(i,o)\mid E\big)$ is the \textbf{over-restriction residual}: the probability that an admitted artefact fails the formalization yet meets the requirement. Since $\neg\hat S\wedge R$ implies $\neg\hat S$, $0\le\omega\le\eta$.
\end{itemize}

\begin{proposition}[Gated-fallback comparison]\label{prop:1}
Under (S1)–(S4) and fallback-performance equivalence,
\begin{equation}\label{eq:prop1_bound}
d_{gate}-d_M\ \ge\ P(E)\,\big[\,q_M(E)-\eta-\gamma\,\big].
\end{equation}
\end{proposition}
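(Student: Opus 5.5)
We will prove the bound by splitting both success probabilities over the admission event $E$ and its complement. This leads to an exact identity, from which the inequality follows by discarding a nonnegative term.

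\textbf{Step 1 (decomposition).} By (S4), $R(i,O_{rel})$ equals $R(i,o)$ on $E$ and $R(i,F(i))$ on $\neg E$. Since all events are jointly defined by (S2),
\[
d_{gate}=P\big(R(i,o)\wedge E\big)+P\big(R(i,F(i))\wedge\neg E\big),
\]
and similarly
\[
d_M=P\big(R(i,M(i))\wedge E\big)+P\big(R(i,M(i))\wedge\neg E\big).
\]
Fallback-performance equivalence states precisely that the two rejected-region terms coincide. They cancel in the difference, leaving
\[
d_{gate}-d_M=P\big(R(i,o)\wedge E\big)-P\big(R(i,M(i))\wedge E\big).
\]
If $\bot$ can occur as a fallback or incumbent output, (S1) keeps $R$ well defined on it with value $0$. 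No further care is needed there, because those terms have already cancelled or are simply counted as failures.

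\textbf{Step 2 (conditioning on $E$).} Using (S3), divide by $P(E)>0$ and write
\[
d_{gate}-d_M=P(E)\big[P(R(i,o)\mid E)-P(R(i,M(i))\mid E)\big].
\]
The second conditional probability is $1-q_M(E)$ by definition. For the first, partition the admitted event according to $\hat S$ and $R$:
\[
P(\neg R(i,o)\mid E)=P(\hat S\wedge\neg R\mid E)+P(\neg\hat S\wedge\neg R\mid E)=\gamma+(\eta-\omega).
\]
Here we use $P(\neg\hat S\mid E)=\eta=P(\neg\hat S\wedge R\mid E)+P(\neg\hat S\wedge\neg R\mid E)$. This yields the exact identity
\[
d_{gate}-d_M=P(E)\big[q_M(E)-\eta-\gamma+\omega\big].
\]

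\textbf{Step 3 (bound).} Since $\omega\ge 0$, as noted just before the proposition, dropping $\omega$ gives \eqref{eq:prop1_bound}.

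We expect no real obstacle. The only delicate point is bookkeeping in Step 1. We must use the equivalence assumption in its unconditional joint form, which avoids requiring $P(\neg E)>0$. We must also make sure that $o$ is only evaluated by $R$ on $E$, so that $R(i,o)$ never needs to be defined on rejected instances beyond what (S2) already supplies.
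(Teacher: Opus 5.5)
Your proof is correct and follows the paper's argument essentially step for step: the same split of $d_{gate}$ and $d_M$ over $E$ and $\neg E$, cancellation of the rejected-region terms by fallback-performance equivalence, partition of $\neg R(i,o)$ by $\hat S$ to obtain the exact identity with $\omega$, and dropping the nonnegative term $P(E)\,\omega$. Your point about using the equivalence assumption in its unconditional joint form, so that $P(\neg E)>0$ is not needed, is already built into how the paper states that assumption.
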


\emph{Proof.} See Appendix~\ref{app:A}.

The proof establishes the exact identity
\begin{equation}\label{eq:prop1_exact}
d_{gate}-d_M\;=\;P(E)\,\big[\,q_M(E)-\eta-\gamma+\omega\,\big],
\end{equation}
from which~\eqref{eq:prop1_bound} follows because $\omega\ge0$ and $P(E)>0$.

Any claim that “AI-assisted execution is no worse than the incumbent” thus depends on the gate's error on admitted artefacts ($\eta$), the fidelity of the formalization ($\gamma$) and the incumbent's performance \textbf{on the admitted instances} ($q_M(E)$). The last matters: a gate that admits only easy instances must be compared with the incumbent's error on those instances, which may be much smaller than its overall error.

\begin{corollary}[Type A]\label{cor:1}
Suppose the verifier-soundness obligation of a Type A gate holds and the declared population $D$ lies within $I^{valid}$. Then $\eta=0$, hence $\omega=0$ (since $0\le\omega\le\eta$), and by~\eqref{eq:prop1_exact}
\[
d_{gate}-d_M=P(E)\,\big[q_M(E)-\gamma\big].
\]
Under the additional ideal assumption $\gamma=0$,
\[
d_{gate}-d_M=P(E)\,q_M(E)=P\big(E\wedge\neg R(i,M(i))\big)\ \ge\ 0 .
\]
\end{corollary}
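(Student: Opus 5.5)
The corollary follows from the exact identity~\eqref{eq:prop1_exact} of Proposition~\ref{prop:1} once $\eta=0$ is established. The plan is therefore to (i) derive $\eta=0$ from Type A soundness, (ii) get $\omega=0$ for free, (iii) substitute into the identity, and (iv) rewrite the $\gamma=0$ case as a joint probability.

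\textbf{Step (i): $\eta=0$.} Verifier soundness holds for every $i\in I^{valid}$, and the gate admits only when $V$ accepts. Hence the implication~\eqref{eq:GA}, $E(i,o)\Rightarrow\hat S(i,o)$, holds pointwise for all $i\in I^{valid}$ and all $o$. Because $D$ is supported in $I^{valid}$, the event $E\wedge\neg\hat S(i,o)$ is empty up to a null set. Therefore $P(\neg\hat S\wedge E)=0$. Assumption (S3) gives $P(E)>0$, so the conditional probability $\eta=P(\neg\hat S\mid E)$ is well defined and equals zero. This is the only step with content, and the one point to watch is the measure-theoretic phrasing: the pointwise implication must transfer to the probability space of (S2). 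That transfer is immediate if $E$ and $\hat S$ are measurable functions of $(i,o)$, which (S2) implicitly provides.

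\textbf{Steps (ii) and (iii).} The sandwich $0\le\omega\le\eta$ noted in the definition of $\omega$ then gives $\omega=0$. Substituting $\eta=\omega=0$ into~\eqref{eq:prop1_exact} yields $d_{gate}-d_M=P(E)[q_M(E)-\gamma]$, as claimed. Setting $\gamma=0$ leaves $P(E)\,q_M(E)$.

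\textbf{Step (iv): the $\gamma=0$ case.} Expand $q_M(E)$ as $P(\neg R(i,M(i))\wedge E)/P(E)$, which again uses $P(E)>0$. This turns the product into $P(E\wedge\neg R(i,M(i)))$, and that quantity is nonnegative because it is a probability.

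No further obstacle is expected. All the work is done by Proposition~\ref{prop:1}, taken as given from Appendix~\ref{app:A}, and the corollary is a specialization of it.
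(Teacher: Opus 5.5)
Your proposal is correct and follows the paper's own derivation. Soundness on $I^{valid}$ together with $D\subseteq I^{valid}$ gives $\eta=0$; the sandwich $0\le\omega\le\eta$ gives $\omega=0$; substituting into the exact identity~\eqref{eq:prop1_exact} gives the first display; and $P(E)\,q_M(E)=P(E\wedge\neg R(i,M(i)))$ follows from the definition of conditional probability with $P(E)>0$ from (S3). Your explicit step $P(\neg\hat S\wedge E)=0$ just spells out the paper's one-line ``Then $\eta=0$.''
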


Under the ideal assumption $\gamma=0$, the non-degradation conclusion $d_{gate}\ge d_M$ requires no estimate of the incumbent's error rate or of the generator's reliability; the size of the improvement, $P(E)\,q_M(E)$, still depends on the admission probability and on $q_M(E)$. The conclusion is conditional on perfect fidelity, which approval does not establish; the primary Type A guarantee remains $G_A$.

\begin{corollary}[Type B]\label{cor:2}
For a Type B gate, with $D=D_\kappa$ and $G$ the calibrated generator version, consider the event over calibration (of probability at least $1-\delta$) on which $\eta\le\alpha$ and $P(E)\ge c$. On that event, by~\eqref{eq:prop1_bound},
\[
d_{gate}-d_M\ \ge\ P(E)\,\big[q_M(E)-\alpha-\gamma\big].
\]
Consequently, if $\alpha+\gamma\le q_M(E)$, then $d_{gate}\ge d_M$ and moreover
\[
d_{gate}-d_M\ \ge\ c\,\big[q_M(E)-\alpha-\gamma\big]\ \ge\ 0 .
\]
\end{corollary}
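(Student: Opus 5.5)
The statement to prove is Corollary~\ref{cor:2}. The plan is to take Proposition~\ref{prop:1} as given and work entirely on the calibration event. By Definition~\ref{def:9}, the gate $\hat g$ is produced by a calibration procedure valid for $(\alpha,c,\delta)$, so by~\eqref{eq:GB} the event $\mathcal{C}=\{\eta(\hat g)\le\alpha,\ \mathrm{cov}(\hat g)\ge c\}$ has $P_{\text{cal}}$-probability at least $1-\delta$. The first step is to fix a calibration outcome in $\mathcal{C}$, which fixes $\hat g$, and treat it as the gate in the setting of Section~\ref{sec:3.9}. Here $D=D_\kappa$ and $G$ is the calibrated version, so $\eta$ and $P(E)$ in Proposition~\ref{prop:1} coincide with $\eta(\hat g)$ and $\mathrm{cov}(\hat g)$.

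Next I check the hypotheses of Proposition~\ref{prop:1}. On $\mathcal{C}$ we have $P(E)=\mathrm{cov}(\hat g)\ge c>0$, which gives (S3). Conditions (S1), (S2), (S4) and fallback-performance equivalence are carried over as standing assumptions of the comparison; the corollary is implicitly conditional on them, and I would say so explicitly. Applying~\eqref{eq:prop1_bound} then gives $d_{gate}-d_M\ge P(E)[q_M(E)-\eta-\gamma]$. Since $P(E)>0$ and $\eta\le\alpha$, replacing $-\eta$ by $-\alpha$ only decreases the bracket, which yields the first displayed inequality.

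For the second part, assume $\alpha+\gamma\le q_M(E)$. The bracket $q_M(E)-\alpha-\gamma$ is then nonnegative. Multiplying it by $P(E)\ge c$ gives $P(E)[\cdot]\ge c[\cdot]\ge 0$, which gives both $d_{gate}\ge d_M$ and the stated lower bound with $c$.

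The main obstacle is bookkeeping rather than algebra. Every quantity involved ($\eta$, $\gamma$, $q_M(E)$, $P(E)$) depends on the realized gate $\hat g$, so the final statement has to be read as holding on $\mathcal{C}$, that is, with $P_{\text{cal}}$-probability at least $1-\delta$. It must not be read as an unconditional inequality. In particular, the hypothesis $\alpha+\gamma\le q_M(E)$ refers to the $E$ of the realized gate. I would state that dependence explicitly and note that the deployment-level probabilities are taken conditionally on the calibration sample, i.e.\ with $\hat g$ held fixed.
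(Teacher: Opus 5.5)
Your proposal is correct and follows the paper's own argument. On the calibration event you apply \eqref{eq:prop1_bound}, replace $\eta$ by $\alpha$, and, when the bracket is nonnegative, replace $P(E)$ by its lower bound $c$; your explicit checks that $P(E)\ge c>0$ secures (S3) and that $\gamma$ and $q_M(E)$ refer to the realized calibrated gate match the paper's accompanying remarks.
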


Here $q_M(E)$ refers to the sub-population admitted by the calibrated gate, and it must be evidenced for that gate if the comparison is to be claimed.

\begin{corollary}[Types C, D, E]\label{cor:3}
The framework provides no comparison of this form for Types C, D and E.
\end{corollary}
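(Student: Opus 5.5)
The corollary is a statement about which hypotheses of Proposition~\ref{prop:1} can be populated from the guarantee clauses, so I would prove it by going through the regimes one at a time. For each, I would check whether it supplies three things. The first is an admission event $E$ for an AI-produced artefact with $P(E)>0$, as required by (S3) and (S4). The second is an approved formalization $\hat S$, so that $\eta$ and $\gamma$ are defined. The third is a guarantee clause bounding $\eta$, as $G_A$ does via Corollary~\ref{cor:1} and $G_B$ does via Corollary~\ref{cor:2}. I read ``a comparison of this form'' as a lower bound on $d_{gate}-d_M$ obtained from~\eqref{eq:prop1_bound} with $\eta$ controlled by the regime's own guarantee. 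The proof then consists of showing that no regime among C, D and E supplies all three.

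\textbf{Types D and E, and \texttt{C[assist]}.} By Definitions~\ref{def:11}, \ref{def:12} and~\ref{def:10}, together with the guarantee clauses $G_D$, $G_E$ and $G_C$ for the assist role, no AI-produced artefact is ever released as the deliverable. There is therefore no gate in the sense of Definition~\ref{def:4} whose admission event releases $o$. If one insists on a formal event $E$ on which $O_{rel}=o$, it must be empty. Then $P(E)=0$ and (S3) fails, so~\eqref{eq:prop1_exact} cannot be applied. The only residual reading identifies $O_{rel}$ with the retained path's output, which is either the incumbent itself or an unmodelled human or tool process. In that case $d_{gate}-d_M$ is a comparison of two non-AI processes, about which the framework says nothing.

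\textbf{\texttt{C[adjudicate]}.} This is the only case in which an AI artefact can be admitted, with $E$ being the adjudicator's recorded acceptance, so it is the step I expect to be the real obstacle. The argument is that the data required by~\eqref{eq:prop1_bound} are not supplied by the regime, for three reasons. First, Definition~\ref{def:10} does not require an approved formalization $\hat S$, so $\eta$ and $\gamma$ may be undefined. Second, the requirement may be non-bivalent (Definition~\ref{def:2}), which violates (S1). Third, even when $\hat S$ exists and (S1)--(S4) hold, the identity~\eqref{eq:prop1_exact} is true but vacuous for this purpose. The reason is that $G_C$ is procedural and explicitly makes no claim about error rates, so it yields no bound on $\eta$ comparable to $\eta=0$ in Type A or $\eta\le\alpha$ in Type B. In addition, fallback-performance equivalence is undercut by possible anchoring, which the framework declines to model (Section~\ref{sec:3.5}). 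Combining the three regime cases establishes the corollary.
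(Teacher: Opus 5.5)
Your proposal is correct and follows essentially the paper's own argument. For Types D, E and \texttt{C[assist]}, no AI artefact is admitted, so $P(E)=0$ and (S3) fails; for \texttt{C[adjudicate]}, (S1) may fail, and otherwise no record clause bounds $\eta$ and fallback-performance equivalence cannot be presumed because the adjudicator has seen the AI material. Your further point, that \texttt{C[adjudicate]} need not carry an approved $\hat S$ so that $\eta$ and $\gamma$ may be undefined, is a valid supplement that the paper does not state explicitly.
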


For Types D and E, and for \texttt{C[assist]}, no AI artefact is admitted, so $P(E)=0$ and (S3) fails; for \texttt{C[adjudicate]}, (S1) often fails, and otherwise no record clause bounds $\eta$ for a human gate and fallback-performance equivalence cannot be presumed, because the adjudicator has seen the AI material.

\section{Staged Classification and Readiness Procedure}\label{sec:4}

The framework's staged classification and readiness procedure assigns a regime to each workflow unit and records its readiness, in four stages: decomposition and specification (Stage 0); participation, whether and how AI contributes (Stage 1); assurance, the admission rule where AI may produce the deliverable (Stage 2); and record and readiness (Stage 3).

\subsection{Participation and Assurance Predicates}\label{sec:4.1}

\begin{definition}[Predicates]\label{def:14}

For a workflow unit $\kappa$:

\emph{Participation predicates (Stage 1):}

\begin{itemize}
\item \textbf{$P_{assist}(\kappa)$} — AI can materially contribute an artefact or information to the unit, and such contribution is permitted.
\item \textbf{$P_{incumbent}(\kappa)$} — an established deterministic tool path already produces the unit's deliverable, with stated validation or assurance evidence and a declared validity domain.
\item \textbf{$P_{value}(\kappa)$} — AI participation would provide meaningful, justified value beyond the incumbent path (capability, efficiency or cost), documented with its reasons.
\item \textbf{$P_{deleg}(\kappa)$} — it is technically meaningful and organizationally permitted for an AI-produced artefact to become the unit's deliverable after admission.
\end{itemize}

\emph{Assurance predicates (Stage 2):}

\begin{itemize}
\item \textbf{$P_{formal}(\kappa)$} — an approved formalization $\hat S_\kappa$ exists for a declared scope, and its fidelity obligation is satisfied with stated evidence (Section~\ref{sec:3.8}).
\item \textbf{$P_{sound}(\kappa)$} — a deterministic verifier for $\hat S_\kappa$ is implemented and terminates within budget, and its soundness obligation is satisfied with stated evidence (Definition~\ref{def:7}). Independence of the verifier from the generator strengthens [Eng] evidence of soundness but is not required.
\item \textbf{$P_{calib}(\kappa)$} — a meaningful recurring deployment population is declared, and for it and a fixed generator version, a gate has been produced by a calibration procedure valid for stated targets $(\alpha,c,\delta)$ with $c>0$, using trusted reference labels (Definitions~\ref{def:8}--\ref{def:9}).
\end{itemize}
\end{definition}

\needspace{5\baselineskip}

\textbf{Kinds of predicate.}

\begin{itemize}
\item \textbf{Participation predicates} are engineering and organizational judgements. $P_{value}$ depends on available AI capabilities, is recorded with its justification, and is evaluated only when $P_{incumbent}$ holds.
\item \textbf{Assurance predicates} concern the unit's approved formalization and available assurance mechanisms; the regime they select does not depend on generator capability, and only a Type B \emph{guarantee instance} refers to a generator version.
\item \textbf{$P_{sound}$ and $P_{calib}$} are \textbf{mechanism-availability} predicates: true only when the mechanism actually exists and its defining property (soundness, or calibration validity) is supported, not when it merely could exist. They determine classification; the record's remaining obligations (shift monitoring, acceptance-rate reporting, fallback declaration, provenance, human authority) are assessed under readiness (Section~\ref{sec:4.2}).
\end{itemize}

\needspace{36\baselineskip}
\subsection{Staged Classification Procedure}\label{sec:4.2}

\par\noindent\refstepcounter{procedure}\label{proc:1}\textbf{Procedure~\theprocedure\ (Classification).}\par\nopagebreak\vspace{-\parskip}
\begin{Verbatim}[commandchars=\\\{\},formatcom=\procfont,frame=lines,framesep=2.5mm,rulecolor=\color{black!45}]
Stage 0 — Decompose and specify
  Decompose each activity into workflow units with homogeneous
  participation and assurance characteristics (Section 4.3).
  For each workflow unit κ:
    record R_κ and I_κ^valid;
    where applicable, produce and approve Ŝ_κ with declared scope and version;
    classify τ_form(κ) by this procedure (Section 3.8).

For each workflow unit κ:
  Stage 1 — Participation
    if P_incumbent(κ) and not P_value(κ):   r_κ ← D
    else if not P_assist(κ):                r_κ ← E
    else if not P_deleg(κ):                 r_κ ← C[assist]
    else:                                   AI may produce the deliverable → Stage 2

  Stage 2 — Assurance (AI may produce the deliverable)
    if P_formal(κ) and P_sound(κ):          r_κ ← A
    else if P_formal(κ) and P_calib(κ):     r_κ ← B
    else:                                   r_κ ← C[adjudicate]
    record any other regime attainable in principle as target(κ)

  Stage 3 — Record and readiness
    record AR_κ: R_κ, Ŝ_κ, r_κ, mechanism, fallback where applicable, obligations, evidence
    readiness(AR_κ) ← READY     if every obligation in Obl_κ is satisfied
                                 with adequate evidence
                      NOT READY  otherwise
    AI participates in κ only if
      r_κ ∈ \{A, B, C[assist], C[adjudicate]\} and readiness(AR_κ) = READY;
    where an AI-produced artefact may become the deliverable,
      release follows the mechanism prescribed by r_κ
\end{Verbatim}

Stage 1 exits at the first applicable regime: a unit already produced by an established deterministic tool path to which AI would add no justified value is Type D whether or not AI could assist it, and Type E applies when AI does not participate and that rationale does not apply. For units reaching Stage 2, where AI may produce the deliverable, the procedure selects the admission rule for which a mechanism is available: a deterministic verifier (A), a calibrated statistical gate (B) or authorized human adjudication (\texttt{C[adjudicate]}).

\textbf{Properties by construction.} \textbf{Every workflow unit receives exactly one regime.} Stage 1 either terminates at the first applicable branch in Type D, Type E or \texttt{C[assist]}, or passes the unit to Stage 2. Stage 2 then terminates in Type A, Type B or, when neither admission mechanism is available, \texttt{C[adjudicate]}. Hence \textbf{every unit receives some regime}, and because the procedure exits at the first applicable branch, these assignments are mutually exclusive within the decision procedure. Classification does not depend on readiness, so these properties hold whether or not any record is READY; they are consequences of the procedure's structure, not substantive results about engineering work.

Classification and readiness draw on partly the same evidence but remain distinct statuses. Definitions~\ref{def:7}--\ref{def:12} state what each regime presupposes and guarantees, and Procedure~\ref{proc:1} assigns each unit to exactly one regime. Its predicates $P_{formal}$, $P_{sound}$ and $P_{calib}$ already require the supporting evidence for fidelity, soundness and calibration validity, so that evidence bears on classification as well as on readiness.

\textbf{Classification, target regime and deployment readiness.} Three statuses are recorded for each unit and kept distinct:

\begin{enumerate}
\item \textbf{Regime classification} $r_\kappa$: which participation and assurance regime the unit requires under the staged decision tree. Types A and B are assigned only when their verification or calibration mechanism is actually available; otherwise Stage 2 falls through to \texttt{C[adjudicate]}. The procedure therefore never classifies a unit into A or B on the strength of a mechanism that does not exist.
\item \textbf{Target regime:} a regime attainable in principle but not yet available, for instance a verifier that could be written but has not been implemented and supported by evidence. It records an intended improvement and carries no guarantee.
\item \textbf{Deployment readiness} $\mathrm{readiness}(\mathsf{AR}_\kappa)\in\{\text{READY},\text{NOT READY}\}$: whether every obligation of the assigned assurance record is currently satisfied with adequate evidence (Definition~\ref{def:6}).
\end{enumerate}

A unit is not deployed with AI unless its record is READY. A \texttt{C[adjudicate]} or \texttt{C[assist]} unit without an established authorized judgement or adjudication process remains Type C, is NOT READY for AI deployment, continues under its existing non-AI process, and has no guarantee clause claimed. Readiness also applies to Types D and E: a Type D record whose tool is operated outside its validity domain is NOT READY, and $G_D$ is not claimed. No additional regime is introduced for unready records. This policy concerns the honesty of assurance claims, not a presumption that more human involvement improves outcomes (Section~\ref{sec:3.5}).

\subsection{Composite Activities and Coverage}\label{sec:4.3}

An activity whose requirement has formalizable and non-formalizable parts is decomposed into workflow units, each classified separately: a report, for example, into a unit whose numerical content is checked deterministically against upstream sources and a unit whose completeness and clarity are adjudicated by an engineer.

\begin{definition}[Composite activity and coverage obligation]\label{def:15}

Let an activity $\tau$ be decomposed into workflow units $\kappa_1,\dots,\kappa_k$. Its \emph{composite record} is the tuple $(\mathsf{AR}_{\kappa_1},\dots,\mathsf{AR}_{\kappa_k})$, together with:

\begin{itemize}
\item a \textbf{release rule}: the activity's deliverable is released only if, for every workflow unit, either its assurance record is READY and the prescribed participation and assurance mechanism has produced, admitted or authorized the unit's deliverable, as applicable, or a declared fallback has produced it; or its record is NOT READY and the unit's existing non-AI process has produced it. Under the prescribed mechanism, for Types A and B the gate admits the AI-produced deliverable; for \texttt{C[adjudicate]} the authorized human accepts the AI-produced artefact for release, possibly after revision; for \texttt{C[assist]} the authorized human produces the deliverable or decision, with AI material retained as labelled advice; for Type D the established deterministic path produces the deliverable; and for Type E non-AI execution produces it;
\item a \textbf{coverage obligation}: evidence that the decomposition covers the declared scope of the activity's requirement. Conceptually, with $J_{\mathrm{formal}}=\{\,j:\ \mathsf{AR}_{\kappa_j}\text{ contains an approved }\hat S_{\kappa_j}\,\}$,
\[
R_\tau\ \Leftarrow\ \bigwedge_{j\in J_{\mathrm{formal}}}\hat S_{\kappa_j}\ \wedge\ R_{res},
\]
where $R_{res}$ denotes the residual part of the declared scope of $R_\tau$ that lies within the declared scope of no $\hat S_{\kappa_j}$ and is assigned to the units without an approved formalization (typically human judgement or adjudication); an empty conjunction or an empty residual is read as true.
\end{itemize}
\end{definition}

The coverage obligation is itself a fidelity claim about the decomposition. It is supported by engineering evidence and recorded approval ([Eng]/[Org]), typically through \texttt{C[adjudicate]}, and is not in general mathematically provable. Review narrowing illustrates its importance: when a reviewer is directed to the non-formalized residual only, any part of the requirement that falls neither within some $\hat S_{\kappa_j}$ ($j\in J_{\mathrm{formal}}$) nor within $R_{res}$ is checked by no one.

The composite record carries no scalar combined guarantee. The guarantees of its units are of different kinds (Section~\ref{sec:3.7}) and remain unit-level statements; the probability that all units are correct is not, in general, the minimum of their probabilities of correctness.

\subsection{Reclassification and the Workflow Boundary}\label{sec:4.4}

\textbf{Reclassification and versioning.} A unit's regime may change when its requirement or approved formalization changes, a verifier becomes available with supported soundness, calibration becomes feasible for a declared population, AI capabilities change (altering $P_{value}$ or $P_{deleg}$), organizational permissions change, or the population shifts. Assurance records carry version identifiers, and a regime assignment and readiness status refer to a specific record version. A change may alter readiness without changing the regime: a lapsed human authorization or an unmonitored population shift makes a record NOT READY, and a Type B guarantee instance must be recalibrated after a material change in generator version or population.

\textbf{Workflow boundary.} The framework establishes \textbf{local assurance records for individual workflow units}, each conditional on the unit's inputs lying in the declared valid-input domain. It does not derive:

\begin{itemize}
\item workflow-level dependability;
\item non-degradation at the level of the workflow;
\item independence between units;
\item bounds on error propagation;
\item the safety of the engineered system.
\end{itemize}

Local assurance does not imply workflow-level correctness. A record's guarantee concerns the unit's deliverable \emph{for the inputs received} and is claimed only for inputs in $I^{valid}_\kappa$: a deterministic compliance check fed wrong upstream values may execute correctly on the values it receives, but where those values violate the declared upstream assumptions or lie outside $I^{valid}_\kappa$, the record's guarantee is not claimed, and in no case does the check establish anything about the physical structure. Conversely, a downstream gate may detect upstream errors. Degradation of an individual unit is therefore neither necessary nor sufficient for degradation of the workflow.

Each record states its input assumptions. This makes it possible, in future work, to check whether the guarantees of upstream units support the input assumptions of downstream ones. A composition theory of this kind, including iterative loops in which a downstream decision changes an upstream formalization, is outside the scope of this paper.

\section{Case Study: AI-Assisted Structural Analysis of a Wing-Spar Segment}\label{sec:5}

\textbf{Case-study conduct.} This section reports the controlled execution of the wing-spar workflow; the regime classifications describe AI participation within that executed workflow. A retrospective audit of the original case-study record, which preceded the controlled execution, is reported separately in Supplement S7, and full provenance in Supplement S1.

\subsection{Case-study objective and workflow}\label{sec:5.1}

The case study asks whether the framework can be instantiated and applied to a mixed engineering workflow in which AI participates selectively and its outputs are controlled before they propagate downstream. A structural-analysis workflow suits this question because its workflow units differ in kind:

\begin{itemize}
\item generating a parametric CAD program is open-ended generative work whose product can nevertheless be measured;
\item meshing, finite-element solution and result extraction are already performed by established deterministic tools;
\item the adequacy of idealizations and the treatment of unresolved results remain matters of engineering judgement.
\end{itemize}

The engineering object is a tapered, stiffened, cantilevered aluminium wing-spar segment with lightening holes, idealized as a mid-surface shell model. It is evaluated against project-defined stress and buckling criteria under positive and negative tip loading. Tip deflection and a comparison between linear and geometrically nonlinear analyses are reported and used diagnostically. In the implemented workflow (Figure~\ref{fig:5.1}), AI-generated CAD enters only through a deterministic gate with a declared fallback, downstream computation uses retained deterministic tools, and engineering judgement is applied explicitly.

The participation and assurance records, requirements, acceptance criteria, tool versions and acceptance rules were fixed before generation began. The workflow was then executed once, end to end.

The study is an author-led research case study, not a certification or independent assurance exercise; its scope limits are stated in Section~\ref{sec:6.2}.

\begin{figure}[tbp]
\centering
\includegraphics{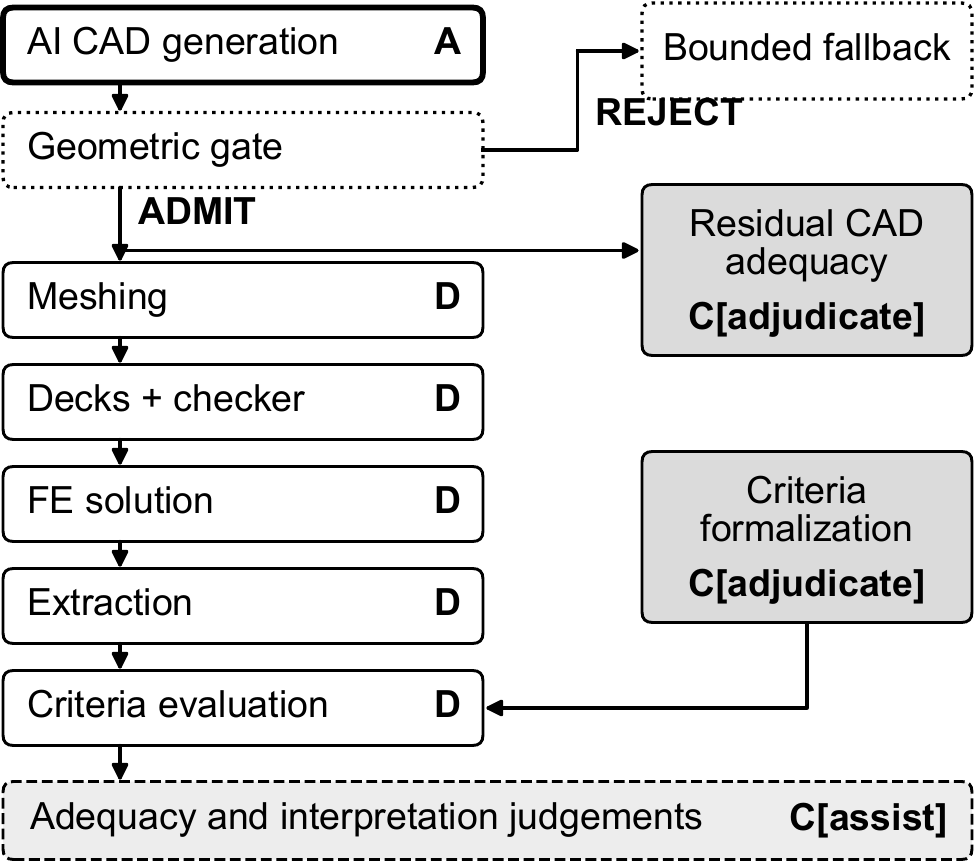}
\caption{Executed workflow and regime allocation. AI-generated CAD (Type A) enters only through a deterministic gate (ADMIT/REJECT) with a predeclared bounded fallback; meshing, deck handling, solution, extraction and evaluation use deterministic tools (Type D). Residual CAD adequacy and criteria formalization are C[adjudicate]; adequacy and interpretation judgements are C[assist]. Representative units are shown; the executed set comprises 17 units (A 1, D 6, C[adjudicate] 4, C[assist] 6; no B or E), all READY. Dotted outlines mark mechanisms, not workflow units. The predeclared fallback allowed at most three attempts through a bounded repair loop, then engineer-authored CAD through the same gate, then a recorded halt; it was not invoked.}\label{fig:5.1}
\end{figure}

\FloatBarrier
\subsection{Instantiating the classification}\label{sec:5.2}

Procedure~\ref{proc:1} decomposes the workflow, which as a whole receives no regime, into a controlled executed set of 17 workflow units. Each unit has a participation and assurance record (regime, mechanism, fallback where applicable, obligations and evidence) recorded before execution, and all 17 assurance records were READY before the run. In this executed set, one unit is Type A, six are Type D, four are C[adjudicate] and six are C[assist]. These counts refer only to the 17-unit executed set, not to the retrospective audit of the original case-study record reported in Supplement S7. Table~\ref{tab:5.1} lists representative workflow units, which together cover every regime instantiated.

\begin{table}[htbp]
\caption{Representative workflow units of the wing-spar workflow and their regimes.}\label{tab:5.1}
\renewcommand{\arraystretch}{1.18}
\begin{tabular}{@{}>{\raggedright\arraybackslash}p{\dimexpr 0.2601\linewidth-1\tabcolsep\relax}>{\raggedright\arraybackslash}p{\dimexpr 0.1958\linewidth-2\tabcolsep\relax}>{\raggedright\arraybackslash}p{\dimexpr 0.1655\linewidth-2\tabcolsep\relax}>{\raggedright\arraybackslash}p{\dimexpr 0.3786\linewidth-1\tabcolsep\relax}@{}}
\toprule
\textbf{Workflow unit} & \textbf{AI role in this execution} & \textbf{Regime} & \textbf{Acceptance / execution mechanism} \\
\midrule
CAD program generation & AI generated the program & A & Deterministic gate measures the produced geometry against an approved operational formalization; admission only if every clause is met; declared fallback \\
Residual CAD adequacy & AI-generated geometry under review; AI compiled evidence & C[adjudicate] & Study author accepts or rejects the artefact on aspects outside the formalized scope \\
Mesh generation & None & D & Scripted mesher with frozen quality rules \\
Solver input-deck generation & None & D & Deterministic deck writer and separate deterministic deck checker \\
Finite-element solution & None & D & Pinned solver within a declared validity domain \\
Result extraction and numerical checks & None & D & Deterministic parsers, equilibrium checks, convergence metrics \\
Formalization of acceptance criteria & AI-assisted draft of the criteria & C[adjudicate] & Study author approves and versions the criteria before any result exists \\
Evaluation of approved criteria & None & D & Deterministic evaluator \\
Adequacy and interpretation judgements (three units) & Advisory: AI compiled evidence and proposed findings & C[assist] & Study author produces the judgement \\
\bottomrule
\end{tabular}
\par\smallskip
\emph{Note.} The AI-role column states the AI contribution recorded in each unit's frozen review or provenance record; “None” means that no AI-produced material entered the unit's execution.
\end{table}

\textbf{Type A.} CAD program generation is the single Type A unit: the program is admitted only through a deterministic geometric gate against an approved operational formalization (Section~\ref{sec:3.3}). Because that formalization covers only specified geometric properties, residual CAD adequacy (whether the geometry suits the analysis and the design intent) is a separate C[adjudicate] unit. Together, the two units are intended to cover the geometric requirement; that coverage is a fidelity claim supported by the recorded approval, not a proved property (Section~\ref{sec:4.3}).

\textbf{Type D.} Meshing, deck generation, finite-element solution, result extraction, evaluation of the approved criteria and engineering report assembly retain established deterministic paths, whose case-specific validation evidence is summarized in Supplement S4. Evaluation is Type D rather than Type A because the deterministic evaluation is the task itself (Sections~\ref{sec:3.3} and~\ref{sec:3.6}).

\textbf{Type C.} C[adjudicate] covers the geometry specification, residual CAD adequacy, the formalization of the acceptance criteria, whose fidelity was not treated as mechanically decidable (Section~\ref{sec:3.8}), and the case-study engineering report narrative. C[assist] covers load-case and idealization decisions, discretization and analysis adequacy, the interpretation of results, the response to unresolved criteria and the study-level review (Section~\ref{sec:3.5}).

\textbf{No Type B or Type E.} Neither is instantiated; Type B would require a recurring deployment population, which a one-off study lacks (Section~\ref{sec:3.4}).

The allocation is therefore selective: deterministically admitted AI generation is confined to the one unit where deterministic verification is tractable; elsewhere, AI-produced drafts are released only through authorized human acceptance (C[adjudicate]), AI advice informs human judgement (C[assist]), and established deterministic tools are retained (Type D).

\FloatBarrier
\subsection{AI generation with deterministic admission}\label{sec:5.3}

In this Type A unit, the generator does not establish the acceptability of its own output. The candidate program was generated by Claude Opus 5.5 in a fresh context with no conversation history. The generator received the frozen generation prompt verbatim. It had no access to the engineer-adopted reference program, the verifier implementation or any downstream result. This is a procedural separation of generation from acceptance, not independence in a formal assurance sense. The model, provider, prompt and generation context of the generation call were recorded (Supplement S1.4). The execution plan specified a “High” reasoning-effort setting, but the model-invocation interface used for the run did not allow that setting to be enforced. The deterministic gate was unaffected, and the run should not be interpreted as evidence of generator performance at a specified reasoning-effort setting.

\textbf{The deterministic gate.} The gate implements the approved geometric formalization. It executes the candidate program within a fixed time budget under execution-containment monitoring, requires exactly one STEP file declaring metre units, and re-imports that file in a fresh process. The re-imported geometry is checked for:

\begin{itemize}
\item topology and face classification;
\item geometric planes and outlines;
\item hole and stiffener geometry;
\item areas and overall extents.
\end{itemize}

Each check has a declared tolerance; the full clause list is in Supplement S3. The gate admits the artefact only if every clause of the formalization is satisfied, including all measured checks.

\textbf{Evidence for the gate.} The gate's soundness is supported by engineering evidence, not by formal proof. Before the study, a pre-run suite of 32 cases reproduced every expected decision. It included controlled mutations of an engineer-adopted reference program, which the gate had to reject, and was complemented by logic-level tests and an author-led code review.

\textbf{Predeclared bounded fallback.} At most three sequential attempts were allowed through a bounded repair loop. After three rejections, an engineer-authored program would have had to pass the same gate; failing that, the unit would halt and the halt would be recorded.

\textbf{Outcome.} The first candidate executed within its budget, produced a single metre-unit STEP file and passed all 23 geometric and validity checks, with no containment violations detected (Table~\ref{tab:5.2}). It was admitted at the first attempt and used unmodified; the repair loop and the fallback were not invoked. A single admission says nothing about the generator's reliability or its admission rate across instances. The assurance comes from the gate, not from the outcome.

\textbf{Residual scope.} Admission establishes conformance to the approved geometric formalization within tolerance, conditional on verifier soundness. It does not establish every engineering property of the artefact. In the separate C[adjudicate] review, the study author judged the geometric representation adequate for the study and recorded, rather than repaired, defensive fallback branches unnecessary in the pinned environment, since the predeclared protocol allowed modification only through the bounded repair loop.

\subsection{Deterministic downstream execution and reproducibility}\label{sec:5.4}

After admission, no further AI-generated engineering artefacts entered the computational chain. The admitted geometry passed to scripted meshing, deterministic deck generation and checking, a pinned solver (CalculiX 2.23) and deterministic extraction with numerical checks; the 21 decks cover linear static, linear buckling and one geometrically nonlinear analysis (Table~\ref{tab:5.2}).

\begin{table}[htbp]
\caption{Execution and integrity evidence.}\label{tab:5.2}
\renewcommand{\arraystretch}{1.18}
\begin{tabular}{@{}>{\raggedright\arraybackslash}p{\dimexpr 0.1635\linewidth-1\tabcolsep\relax}>{\raggedright\arraybackslash}p{\dimexpr 0.4690\linewidth-2\tabcolsep\relax}>{\raggedright\arraybackslash}p{\dimexpr 0.3675\linewidth-1\tabcolsep\relax}@{}}
\toprule
\textbf{Stage} & \textbf{Control / evidence} & \textbf{Observed outcome} \\
\midrule
CAD gate (Type A) & Bounded execution under containment monitoring; 23 geometric and validity checks against the approved formalization; predeclared fallback & ADMIT at attempt 1; 23/23 checks; no containment violations detected; fallback unused; candidate unmodified\textsuperscript{a} \\
Mesh & Lineage check to gate-admitted geometry; frozen quality rules; geometry-conformity check; repeated meshing & 3 levels of quadratic triangular shell elements (3,218 / 12,152 / 47,046 elements); all pass; same-host canonical hashes identical \\
Solver input decks & Separate deterministic deck checker; regeneration comparison & 21/21 pass; byte-identical regeneration \\
Solver & Pinned binary hash; completion and error/warning record & 21/21 complete; no errors or warnings recorded \\
Numerical integrity & Precision-aware force and moment equilibrium; output-consistency conditions & 12/12 linear static jobs pass; all conditions satisfied \\
Frozen inputs & Hash manifest and solver pin re-verified after the run & 61/61 manifest entries and solver binary unchanged (contemporaneous report; later re-hash: Supplement S1.5) \\
\bottomrule
\end{tabular}
\par\smallskip
\textsuperscript{a} An earlier invocation failed before the candidate executed (operator-side path error) and consumed no generation attempt (Supplement S3.5).
\end{table}

\textbf{Forward-carrying controls.} Three controls carry the admission decision forward:

\begin{itemize}
\item the mesher verifies that its input geometry was admitted by the gate and produced by the admitted program;
\item the solver runs only when every deck has passed a separate deterministic checker, which reconstructs the tip load-introduction constraints and confirms the completeness of loads and supports;
\item the evaluator accepts only approved criteria and material data.
\end{itemize}

The controls are independent of the geometry's source by design, although the single run did not test this.

\textbf{Reproducibility.} A 61-entry hash manifest fixed before generation pinned the specifications, criteria, material data, analysis plans, core tools and solver binary. The solver analyses were executed once and not independently repeated. This evidence establishes repeatability of the implemented process, not physical validity of the results, adequacy of the engineering model or formal tool qualification.

\subsection{When a favourable numerical result is not sufficient for PASS}\label{sec:5.5}

\textbf{The convergence rule.} The stress clauses require a quantity both to be converged under the frozen mesh-convergence rule and to meet its threshold. Convergence is judged between the medium and fine meshes, with tolerances fixed before the run:

\begin{itemize}
\item 1 \% for displacement and buckling factors;
\item 5 \% for the stress metric.
\end{itemize}

Tip displacement and both buckling factors converged well within these tolerances. The stress results did not converge uniformly (Figure~\ref{fig:5.2}).

\begin{figure}[tbp]
\centering
\includegraphics{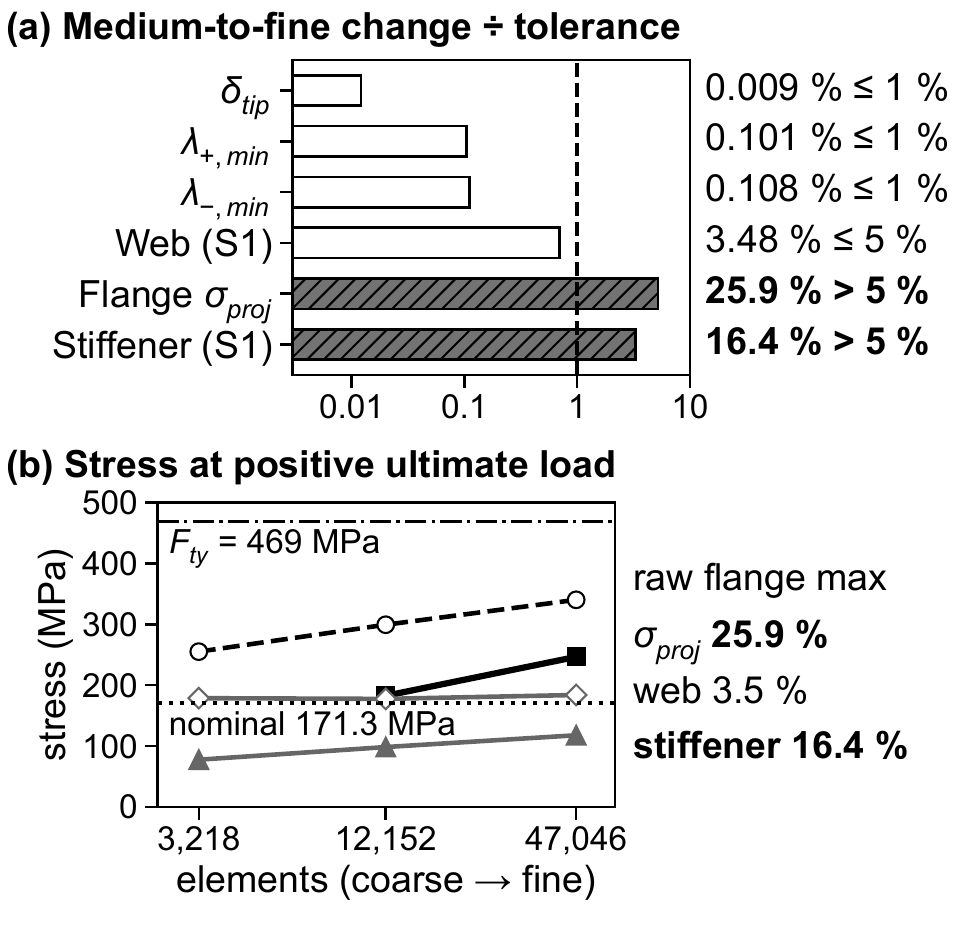}
\caption{Mesh-convergence evidence under the frozen rule. (a) Medium-to-fine relative change of the tip displacement $\delta_{tip}$, the smallest positive buckling factors $\lambda_{+,\min}$ and $\lambda_{-,\min}$, and the web, flange ($\sigma_{proj}$) and stiffener stress metrics (“S1” in the figure denotes the group-maximum stress rule of the frozen engineering plans; Supplement S5.1), divided by the frozen tolerance (1 \% for displacement and buckling factors, 5 \% for the stress metrics; dashed line: tolerance). Hatched bars and bold values exceed the tolerance (NOT\_CONVERGED): the flange project metric $\sigma_{proj}$ (25.9 \%) and the stiffener stress (16.4 \%). (b) Stresses at positive ultimate load versus mesh level (3,218 / 12,152 / 47,046 elements): raw flange root-corner maximum (255.1 / 299.3 / 340.2 MPa), flange project metric $\sigma_{proj}$ (medium 182.6, fine 246.5 MPa), stiffener and web peaks, the nominal root reference (171.3 MPa) and $F_{ty}$ = 469 MPa; bold labels mark NOT\_CONVERGED quantities. The post-hoc interior-flange diagnostic is not plotted and was not a verdict input. Full values: Supplement S5.}\label{fig:5.2}
\end{figure}

\textbf{Three distinct quantities.} The assessment keeps three quantities distinct:

\begin{itemize}
\item the raw finite-element maximum in each component group;
\item the project metric $\sigma_{proj}$ that enters the criteria;
\item a nominal beam-theory reference at the root.
\end{itemize}

\emph{Flange.} At positive ultimate load, the raw flange maximum sits at the clamped root corner and rose with refinement (Figure~\ref{fig:5.2}). The frozen rule anticipated this case. Such a peak is reported separately as a non-converged local clamp peak, not removed as a singularity, and $\sigma_{proj}$ is taken over the flange outside a predeclared root exclusion zone, bounded below by the nominal reference.

In the executed run, however, the maximum outside the exclusion zone ($\sigma_{proj}$) moved to the opposite end of the spar: the tip corner where the load is introduced through the rigid tip constraint, a location the rule had not anticipated. That peak also grew with refinement, from 182.6 MPa to 246.5 MPa (25.9 \%), so $\sigma_{proj}$ was NOT\_CONVERGED.

\FloatBarrier

\emph{Web and stiffener.} The web converged (177.5 MPa to 183.9 MPa, 3.5 \%). The stiffener peak lies at a stiffener–flange junction; junctions are included in the metric by design. It changed from 98.4 MPa to 117.8 MPa (16.4 \%) and was NOT\_CONVERGED. The evidence characterizes the root, tip and junction peaks as mesh-sensitive, unresolved local responses; the study does not establish their mathematical character.

\textbf{Why a low stress ratio did not yield PASS.} For the flange at positive ultimate load, $\sigma_{proj}$ = 246.5 MPa against a yield allowable of $F_{ty}$ = 469 MPa \citepalias{milhdbk5j2003} gives a ratio of 0.526; the other groups and load levels give lower ratios. By magnitude alone this suggests ample margin. But the criterion applies to a converged metric, and the metric was not converged. The evaluator therefore returned NOT\_CONVERGED for all four stress clauses, at limit and ultimate load in both senses, and the study's final disposition was ADJUDICATE.

ADJUDICATE is the study-level disposition used when an evaluated criterion cannot be concluded as PASS under its predeclared rule and requires engineering judgement. Final dispositions (PASS, ADJUDICATE, Not applicable), like the unit outcomes of Section~\ref{sec:5.6}, are study-level outcome labels, distinct from the evaluator's clause verdicts, from deployment readiness and from the regime guarantees. ADJUDICATE is also distinct from the C[adjudicate] participation regime, in which a human accepts or rejects an AI-produced artefact. An ADJUDICATE disposition establishes neither structural failure nor structural safety; it records that the available evidence is insufficient for a conclusion under the predeclared rule.

\textbf{Post-hoc diagnostic (not a verdict input).} Only after the verdicts were recorded did a diagnostic examine the interior flange, 0.20 m $\le$ x $\le$ 2.20 m along the span, which excludes 0.20 m at each end.

\begin{itemize}
\item \textbf{Flange:} the interior stress changed from 179.4 MPa to 179.7 MPa (about 0.2 \%), suggesting that the flange non-convergence is confined to the end regions.
\item \textbf{Stiffener:} the stiffener was not resolved. After excluding stiffener elements within 0.05 m of the flanges, its medium-to-fine change remained about 13 \%.
\end{itemize}

The diagnostic was not part of the predeclared rule, was not passed to the evaluator and changed no verdict. It helps explain the flange outcome; it does not establish compliance.

\textbf{A limitation of the approved formalization.} The predeclared stress rule represented the root-clamp region but not the tip load-introduction region revealed by the run. This is the requirement-to-formalization gap of Section~\ref{sec:3.8}, observed in execution: approval did not ensure that the rule represented the requirement in every region the analysis produced. The outcome was recorded as a limitation, and no criterion was changed. Any refinement of the load introduction, local modelling or stress metric would be a new version of the formalization, to be fixed before a future rerun (Section~\ref{sec:4.4}), not adopted retrospectively. Because the assurance record fixed the metric, the tolerance and the exclusion zone in advance, the gap surfaced as adjudication instead of being tuned away after the result was seen.

\FloatBarrier
\subsection{Final verdicts and engineering adjudication}\label{sec:5.6}

Table~\ref{tab:5.3} lists, for each clause, the deterministic evaluator verdict and the study's final disposition: NOT\_CONVERGED (Q-1, Q-2) becomes ADJUDICATE; PASS (Q-3) remains PASS; the reporting-only verdict REPORTED (Q-4) has no disposition (Not applicable); and SUPPORTED (Q-5) becomes ADJUDICATE, as predeclared.

\begin{table}[htbp]
\caption{Clause verdicts and final dispositions.}\label{tab:5.3}
\renewcommand{\arraystretch}{1.18}
\begin{tabular}{@{}>{\raggedright\arraybackslash}p{\dimexpr 0.4384\linewidth-1\tabcolsep\relax}>{\raggedright\arraybackslash}p{\dimexpr 0.3050\linewidth-2\tabcolsep\relax}>{\raggedright\arraybackslash}p{\dimexpr 0.2566\linewidth-1\tabcolsep\relax}@{}}
\toprule
\textbf{Clause} & \textbf{Evaluator verdict} & \textbf{Final disposition} \\
\midrule
Q-1+ / Q-1$-$ — stress at limit load (positive / negative) & NOT\_CONVERGED & ADJUDICATE \\
Q-2+ / Q-2$-$ — stress at ultimate load (positive / negative) & NOT\_CONVERGED & ADJUDICATE \\
Q-3+ / Q-3$-$ — linear buckling (positive / negative) & PASS & PASS \\
Q-4 — tip deflection (reporting only) & REPORTED & Not applicable \\
Q-5 — linear vs geometrically nonlinear diagnostic & SUPPORTED & ADJUDICATE (predeclared) \\
\bottomrule
\end{tabular}
\par\smallskip
\emph{Note.} Assurance-record readiness: 17/17 READY. Post-run \textbf{unit outcomes} (study-level labels for each workflow unit, distinct from readiness, regime guarantees and the clause verdicts above): PASS 12, ADJUDICATE 3, Not applicable 2, FAIL 0.
\end{table}

\textbf{Buckling: PASS.} The smallest positive buckling factor was $\lambda$ $\approx$ 4.835 under positive load and $\lambda$ $\approx$ 12.086 under negative load, against a project threshold of $\lambda$ $\ge$ 1.5. The evaluator's rule requires a complete run with a usable positive factor, convergence of that factor within 1 \% and the converged factor to meet the threshold. All three held (medium-to-fine change about 10$^{-3}$), so the evaluator returned PASS. The same evaluation that withheld PASS from the stress clauses granted it here, because the evidence the rule required was present.

The C[assist] interpretation identified the first mode in both senses as a global lateral(-torsional) mode of the laterally unrestrained cantilever, so the factors depend on the idealization of tip and lateral restraint. The result is a perfect-geometry linear-bifurcation outcome under the project criteria, not certification-level buckling assurance.

\FloatBarrier

\textbf{Q-5: a favourable cross-check under a predeclared scope limit.} Q-5 compares the linear response at positive ultimate load with a geometrically nonlinear analysis (CalculiX NLGEOM). All relative differences were within the frozen 5 \% threshold, and the evaluator returned SUPPORTED. The flange comparison used the non-converged project metric of Section~\ref{sec:5.5}, so it was treated only as a diagnostic comparison.

The disposition is nevertheless ADJUDICATE. Before any analysis, the study had declared that Q-5 requires adjudication, because the finite-rotation validity of the linearized tip-load relation had not been established. The results support that caution:

\begin{itemize}
\item the tip-deflection ratio at ultimate load ($\delta$/L = 0.0131, with span L = 2.4 m) exceeded the 0.01 escalation threshold;
\item the nonlinear run passed every integrity check but showed a small moment-equilibrium deficit (relative error about 2.01 $\times$ 10$^{-4}$), consistent with the known limitation of the linearized relation. It was treated as diagnostic, not as release-supporting evidence.
\end{itemize}

A favourable numerical cross-check does not override a predeclared scope limitation.

\textbf{The human role.} Human judgement remained explicit: the study author, not AI advice, decided what could be concluded where deterministic evidence stopped short, and no criterion was modified to force acceptance; the guarantee is procedural, not one of correct judgement (Section~\ref{sec:3.5}).

\textbf{READY is not PASS.} All 17 assurance records were READY, meaning that every obligation was satisfied with adequate evidence (Definition~\ref{def:6}), yet readiness did not predetermine the outcome: of the 17 units with READY records, 12 ended PASS, 3 ADJUDICATE and 2 Not applicable (note to Table~\ref{tab:5.3}). Unit outcomes record each unit's post-run result against its own requirement, and they are distinct from clause verdicts. The evaluation unit, for example, recorded PASS because it executed the approved criteria as specified, even though several of the resulting dispositions were ADJUDICATE.

\textbf{What the case study shows.} Within this single, author-led study, the AI-generated CAD artefact entered the analysis only through deterministic admission under a declared fallback, established deterministic tools were retained, and human judgement handled residual and unresolved evidence. The predeclared evidence requirements, not favourable numerical values, controlled the outcome, and the computational process can be inspected and re-executed within the demonstrated scope. These findings concern the implementability and behaviour of the framework in one workflow.

\section{Discussion}\label{sec:6}

\subsection{Transferability beyond the case study}\label{sec:6.1}

The framework's objects (workflow units and their requirements, approved operational formalizations, participation decisions, participation and assurance mechanisms with their fallbacks, evidence obligations and readiness, and the validity domains of established tools) are not specific to structural engineering, and its regimes are defined by the participation and assurance mechanism available for a workflow unit, not by engineering discipline or by a level of AI autonomy. The framework structure is therefore transferable in principle, but each application requires a fresh decomposition, formalization, evidence assessment and validity-domain decision.

Brief illustrations show how the regime semantics carry over; they are not claims of validated deployment.

\begin{itemize}
\item \textbf{Type A} fits workflow units in which AI generates a non-trivial artefact that a sound deterministic verifier checks against an approved formalization. Examples are machine-generated geometry measured against an approved dimensional specification, or a generated configuration checked against a machine-readable interface and rule set.
\item \textbf{Type D.} A deterministically checkable output does not by itself make a unit Type A. Evaluating a computed stress, thermal margin or clearance against an approved limit is Type D when an established deterministic tool path already produces that evaluation (Section~\ref{sec:3.6}).
\item \textbf{Types B, C and E} retain the conditions of Sections~\ref{sec:3.4}--\ref{sec:3.6}: Type B requires a genuinely recurring, calibrated setting; Type C retains authorized human judgement or acceptance; and Type E records justified non-participation.
\end{itemize}

The case for cross-domain applicability therefore rests on the domain-independent structure of the participation and assurance records and the classification procedure. It is an argument from structure, not empirical validation across domains. The case study shows only that the framework can be instantiated and applied in one executed structural-engineering workflow, in which Types A and D, C[adjudicate] and C[assist] were instantiated but neither Type B nor Type E (Section~\ref{sec:5.2}). It does not demonstrate readiness, AI reliability or engineering effectiveness in other domains, nor that any particular workflow will yield a given allocation. Section~\ref{sec:6.4} identifies the evidence that would strengthen the case for transferability.

\subsection{Limitations}\label{sec:6.2}

\subsubsection*{Framework-level limitations}

Predicates such as $P_{value}$, $P_{deleg}$ and $P_{formal}$ record engineering and organizational judgements that the framework does not establish mechanically. Different decompositions may yield different allocations, and coverage of a requirement by a decomposition is itself an assurance obligation, supported by evidence rather than proved (Section~\ref{sec:4.3}).

Approving an operational formalization does not prove its fidelity to the underlying requirement. For natural-language requirements, fidelity rests on engineering evidence and authorized interpretation. Refinement or equivalence evidence is possible only where the authoritative requirement is itself formal (Section~\ref{sec:3.8}).

\needspace{5\baselineskip}

\textbf{Each guarantee carries its own conditions.}

\begin{itemize}
\item \textbf{Type A} is conditional on verifier soundness relative to the approved formalization. Soundness is often supported by testing, mutation testing and review rather than proof; verifier independence may strengthen that evidence but does not establish soundness.
\item \textbf{Type B} is specific to a declared recurring population, a fixed generator version, reference labels and stated calibration assumptions, and is unavailable for one-off units.
\item \textbf{Type C} is procedural: authorized human acceptance or judgement does not ensure correctness, and automation bias, anchoring and weak review remain possible.
\item \textbf{Type D} retains an established path only within its declared validity domain and stated evidence, and does not cover model adequacy.
\end{itemize}

A READY assurance record has its obligations satisfied with adequate evidence, but judging adequacy is itself a judgement. Readiness neither replaces the regime's guarantee nor implies a PASS (Section~\ref{sec:5.6}).

Assurance records are local to workflow units: the framework derives no workflow-level dependability, error-propagation bound or system-safety claim (Section~\ref{sec:4.4}). Proposition~\ref{prop:1} (Section~\ref{sec:3.9}) concerns a single workflow unit under its stated recurring-probability setting and assumptions, including fallback-performance equivalence. It is not a workflow result and does not apply to one-off decisions such as the wing-spar case.

Requirements, formalizations, verifiers, tools, generator versions and populations change, so assurance records must be versioned and re-evaluated. Type B gates must be recalibrated after a material change in population or generator version, and changed verifier or tool evidence may require renewed assurance for Type A and Type D.

\subsubsection*{Case-study limitations}

The study was author-led. Its engineering reviews and adjudications were decisions of the study author, not independent third-party assurance.

The structural model is idealized:

\begin{itemize}
\item a mid-surface shell representation;
\item a linearized rigid tip-load relation without established finite-rotation validity;
\item perfect-geometry linear buckling;
\item project criteria motivated by, but not equivalent to, certification requirements.
\end{itemize}

The results are interpretable only within these modelling choices.

The material allowables did not govern any disposition: the stress-clause dispositions were governed by non-convergence rather than by exceedance of an allowable. The allowables were taken from MIL-HDBK-5J~(\citeyear{milhdbk5j2003}), Table 3.7.6.0(b1), for 7075-T651 plate, A-basis. The governing $F_{ty}$ values are 469 MPa for the 18 mm flange and 462 MPa for the 8 mm web and stiffeners, using the minimum of the listed L and LT values for the corresponding thickness ranges.

\needspace{5\baselineskip}

\textbf{Execution and reproducibility.}

\begin{itemize}
\item OS-level isolation of the AI-generated code execution was not demonstrated.
\item Mesh determinism was shown on the same host only.
\item The solver analyses were not independently repeated, and cross-platform reproducibility was not tested.
\item Reproducibility of the implemented process is not evidence of physical model validity.
\end{itemize}

No comparable manual baseline exists, so no reduction in engineering time, effort or cost is claimed.

Within these boundaries, the case study demonstrates that the framework can be instantiated and applied in an executed engineering workflow with selective AI participation, deterministic admission of the AI-generated artefact, inspectable evidence handling and explicit routing of unresolved evidence to an ADJUDICATE disposition and engineering judgement. It does not establish structural safety, certification or regulatory compliance, independent assurance, general AI reliability, cross-platform reproducibility, workflow-level non-degradation, system-level dependability or productivity improvement.

\subsection{Dependability, certification and accountability}\label{sec:6.3}

Dependability motivates the framework, but the framework addresses it at a specific level: how AI participates in individual workflow units, and what assurance mechanism and evidence justify that participation.

Certification is a different matter. It is a determination made under applicable standards, organizational processes, independence requirements and the authority of the relevant regulator. Participation and assurance records can support assurance and certification activities by making the applicable mechanisms, evidence, fallbacks, readiness and formalization versions explicit and inspectable. The framework does not itself establish certification, regulatory compliance or the safety of the engineered system.

The unit-level perspective complements frameworks that characterize AI applications by the relationship between human and machine authority, such as EASA's Level 1 and Level 2 framework for machine-learning applications. Such levels concern how authority and autonomy are distributed between human and AI. The present framework asks, unit by unit, how AI participates and under what assurance mechanism an artefact or decision may enter the engineering workflow. A single application could be described in both ways, and the two descriptions answer different questions; no mapping or equivalence between EASA levels and the participation and assurance regimes is claimed.

For governance purposes, the same unit-level structure identifies, for each regime, where participation, acceptance or execution decisions are taken and where the associated evidence is held (Table~\ref{tab:6.1}). This is a description of governance, not a theory of legal liability.

\begin{table}[htbp]
\caption{Decision or execution locus and evidence/governance focus by regime.}\label{tab:6.1}
\renewcommand{\arraystretch}{1.18}
\begin{tabular}{@{}>{\raggedright\arraybackslash}p{\dimexpr 0.1641\linewidth-1\tabcolsep\relax}>{\raggedright\arraybackslash}p{\dimexpr 0.3041\linewidth-2\tabcolsep\relax}>{\raggedright\arraybackslash}p{\dimexpr 0.5318\linewidth-1\tabcolsep\relax}@{}}
\toprule
\textbf{Regime} & \textbf{Decision or execution locus} & \textbf{Evidence and governance focus} \\
\midrule
A & Deterministic verifier against an approved formalization & Approval of the formalization; evidence for verifier soundness; governance of admission and fallback; versioning of verifier and formalization \\
B & Calibrated statistical gate & Declared recurring population; fixed generator version; reference labels; calibration assumptions; monitoring and recalibration after material change \\
C[adjudicate] & Authorized human acceptance or rejection of an AI-produced artefact & Identified, authorized adjudicator; recorded acceptance, rationale and AI provenance \\
C[assist] & Authorized human judgement, with AI material as labelled advice & Identified, authorized person producing the judgement or deliverable; recorded rationale and provenance of AI advisory material \\
D & Established deterministic tool path & Stated validation evidence; operation within the declared validity domain \\
E & Non-AI execution & Governance as in the non-AI activity; record that no AI artefact contributed \\
\bottomrule
\end{tabular}
\end{table}

Formalization approval is an accountable governance act, not proof of fidelity (Section~\ref{sec:3.8}). The framework does not determine who holds legal duties, what level of independence a regulator requires, which standards apply, or whether certification is granted; these remain external to it.

\FloatBarrier
\subsection{Future work}\label{sec:6.4}

Four directions follow from Sections~\ref{sec:5} and~\ref{sec:6.2}.

\needspace{5\baselineskip}

\textbf{Stronger assurance evidence.}

\begin{itemize}
\item The link from requirement to approved formalization would benefit from systematic traceability, independent re-derivation and review, and structured change control.
\item Where feasible, Type A verifier soundness could be strengthened by formal proof or proof-producing verification. This would strengthen Type A, not become a requirement for every Type A unit.
\item Execution assurance would benefit from OS-level isolation of AI-generated code, and reproducibility evidence from independent reruns and cross-platform repetition.
\end{itemize}

\needspace{5\baselineskip}

\textbf{Broader empirical evidence.}

\begin{itemize}
\item A genuinely recurring deployment satisfying the population, calibration and monitoring conditions of Section~\ref{sec:3.4} is needed to exercise Type B empirically.
\item An independent second engineering case, with independent decomposition, classification and formalization, would add evidence of transferability without by itself establishing it across domains.
\item Type C units raise empirical questions of automation bias, anchoring, the design of advisory material, reviewer independence and the quality of recorded rationale.
\item Any efficiency study would require a predefined comparative baseline and measurement protocol, and its outcome cannot be presumed.
\end{itemize}

\textbf{Case-specific refinement.} The wing-spar stress rule anticipated the root clamp region but not the tip load-introduction region (Section~\ref{sec:5.5}). Any improved metric, exclusion rule or local model would be a new formalization version, fixed before a new run, not tuned retrospectively to convert the present ADJUDICATE outcome into PASS.

\textbf{Architecture and theory.} Assurance records remain local. A composition theory remains open (Section~\ref{sec:4.4}). It would need to cover:

\begin{itemize}
\item the propagation of rejected outputs and fallback deliverables;
\item iterative workflows in which a downstream decision revises an upstream formalization;
\item dependencies among assurance records.
\end{itemize}

Recurring workflow-unit patterns might support reusable assets: assurance-record templates, formalization and evidence patterns, verifier and tool-path records, and classification support. Their reuse would remain subject to checks of scope, version and validity domain. Any automated classifier would itself be a workflow unit to be classified and assured under the framework.

\section{Conclusion}\label{sec:7}

Engineering workflows contain units of work that call for different forms of AI participation and assurance. This paper has developed a formal framework that pairs an explicit AI-participation decision for each workflow unit with the assurance required to justify it. Each unit's participation and assurance record identifies the requirement, any approved operational formalization, the applicable mechanism, any declared fallback, evidence obligations with explicit evidence kinds and the regime guarantee, and carries a deployment-readiness status. The regimes (deterministic or statistically calibrated admission, authorized human judgement or adjudication, retention of an established deterministic path, and explicit non-participation) carry different kinds of guarantee and admit no scalar ordering. The framework also separates formalization fidelity from verifier soundness, keeps classification, target regime and deployment readiness distinct through a staged procedure, and states conditions under which a gated unit can be compared with an incumbent process on a recurring population.

The executed wing-spar case showed that the framework can be instantiated and applied in an engineering workflow with selective AI participation. The AI-generated CAD artefact entered the analysis only through deterministic admission, established deterministic tools were retained, and human judgement and adjudication handled residual and unresolved evidence. The predeclared evidence requirements, not favourable numerical values, controlled the outcome: favourable stress values did not yield PASS without the required convergence evidence, and READY assurance records did not imply PASS.

These conclusions concern local, unit-level assurance in a single author-led case and are bounded by the limitations stated in Section~\ref{sec:6.2}; validating the framework beyond this case will require independent engineering applications.

\bibliographystyle{unsrtnat}
\bibliography{references_v1}

\appendix
\renewcommand{\thesection}{A}\renewcommand{\theHsection}{appA}
\refstepcounter{section}\label{app:A}
\section*{Appendix A. Proof and derivational details for Proposition~\ref{prop:1}}
\renewcommand{\thesubsection}{A.\arabic{subsection}}\renewcommand{\theHsubsection}{appA.\arabic{subsection}}

Notation, the setting (S1)–(S4), fallback-performance equivalence and the quantities $d_{gate}$, $d_M$, $\eta$, $\gamma$, $q_M(E)$ and $\omega$ are as defined in Section~\ref{sec:3.9}.

\subsection{Proof of Proposition~\ref{prop:1}}\label{app:A.1}

\emph{Proof.} Recall the over-restriction residual $\omega=P\big(\neg\hat S(i,o)\wedge R(i,o)\mid E\big)$ of Section~\ref{sec:3.9}.

\begin{enumerate}
\item By (S4) and the law of total probability,
\[
d_{gate}=P\big(R(i,o)\wedge E\big)+P\big(R(i,F(i))\wedge\neg E\big),\qquad d_M=P\big(R(i,M(i))\wedge E\big)+P\big(R(i,M(i))\wedge\neg E\big).
\]
\item By fallback-performance equivalence the second terms coincide, so
\[
d_{gate}-d_M=P(E)\big[P(R(i,o)\mid E)-P(R(i,M(i))\mid E)\big].
\]
\item The second conditional probability is $1-q_M(E)$.
\item For the first, partition $\neg R(i,o)$ according to $\hat S(i,o)$:
\[
P(\neg R\mid E)=P(\hat S\wedge\neg R\mid E)+P(\neg\hat S\wedge\neg R\mid E)=\gamma+(\eta-\omega),
\]
because $\eta=P(\neg\hat S\wedge R\mid E)+P(\neg\hat S\wedge\neg R\mid E)=\omega+P(\neg\hat S\wedge\neg R\mid E)$. Hence $P(R(i,o)\mid E)=1-\eta-\gamma+\omega$.
\item Substituting gives the exact identity
\[
d_{gate}-d_M\;=\;P(E)\,\big[\,q_M(E)-\eta-\gamma+\omega\,\big].
\]
\item Since $\omega\ge0$ and $P(E)>0$, \eqref{eq:prop1_bound} follows. $\square$
\end{enumerate}

\subsection{Remark on the over-restriction residual}\label{app:A.2}

The over-restriction residual $\omega$ is required for the exact decomposition~\eqref{eq:prop1_exact}. It is not needed for~\eqref{eq:prop1_bound}, or for the sufficient conditions for non-degradation derived from~\eqref{eq:prop1_bound}.

\subsection{Corollary~\ref{cor:1} (Type A): further detail}\label{app:A.3}

Under $\gamma=0$, the gain is strictly positive exactly when the incumbent errs with positive probability on instances the gate admits. Its size is limited by the admission probability, so the practical benefit depends on how often the generator produces artefacts that the gate admits.

\subsection{Corollary~\ref{cor:2} (Type B): further detail}\label{app:A.4}

If $\alpha+\gamma>q_M(E)$, the bracket is negative, and the coverage bound does not yield a lower bound in that direction.

\subsection{Corollary~\ref{cor:3} (Types C, D, E): derivational detail}\label{app:A.5}

\begin{itemize}
\item \textbf{Types D and E, and \texttt{C[assist]}:} no AI artefact is admitted, so $P(E)=0$ and (S3) fails. Under Types D and E the workflow unit is executed without AI; under \texttt{C[assist]} AI material is advisory only.
\item \textbf{\texttt{C[adjudicate]}:} $R$ is often not bivalent, so (S1) fails. Where it is bivalent, \eqref{eq:prop1_bound} and~\eqref{eq:prop1_exact} remain algebraically valid. However, no record clause bounds $\eta$ for a human gate, and fallback-performance equivalence cannot be presumed, because the person who adjudicates, and who may then produce the deliverable, has seen the AI material. Any comparison would rest entirely on empirical human-factors evidence, outside the framework's guarantees.
\end{itemize}

\clearpage
\section*{Supplementary Material}\label{sec:supplement}
\raggedbottom
\setlength{\LTpre}{4pt}\setlength{\LTpost}{6pt}
\setcounter{section}{0}
\renewcommand{\thesection}{S\arabic{section}}\renewcommand{\theHsection}{supS\arabic{section}}
\renewcommand{\thesubsection}{\thesection.\arabic{subsection}}\renewcommand{\theHsubsection}{supS\arabic{section}.\arabic{subsection}}
\textbf{Nature of the case study.} This is an author-led research case study showing how the framework is instantiated and applied in an executed mixed engineering workflow: decomposition into workflow units, unit-by-unit classification of AI participation, operation of the regimes, the points at which engineering judgement remains necessary, and routing of insufficient evidence to an ADJUDICATE disposition rather than PASS. It is not a certification, airworthiness or independent assurance exercise; its scope limits are those stated in Section~\ref{sec:6.2}.

\textbf{Responsibility.} Puxue Tan was the study lead, domain engineer and author reviewer, and retained responsibility for the engineering judgements and approvals required by the executed study. Deterministic steps relied on their defined mechanisms, and their recorded tool outputs are the evidence. Human review was used where the assigned regime required engineering judgement (S6).

Except for the explicitly identified later re-hash of 2 October 2026 (S1.5), the reported study values are taken from the frozen execution records; no engineering analysis was rerun for the Supplement.

\needspace{6\baselineskip}

\section{Executed workflow and study setup}\label{sup:S1}

\subsection{Engineering problem}\label{sup:S1.1}

The object is a tapered, stiffened, cantilevered aluminium wing-spar segment (span L = 2.4 m) with four web lightening holes and three one-sided stiffeners. It is idealized as a mid-surface shell model with an 18 mm flange and an 8 mm web and stiffeners, made of 7075-T651. The load is a tip load in positive and negative senses at limit and ultimate levels, with load factors 1.0, 1.5, $-$0.4 and $-$0.6 relative to the positive limit load.

The analyses are:

\begin{itemize}
\item linear static analysis;
\item two-sense linear buckling;
\item one geometrically nonlinear (NLGEOM) cross-check at positive ultimate load.
\end{itemize}

They are evaluated against project-defined stress and buckling criteria (S5).

\needspace{17\baselineskip}

\subsection{Workflow decomposition and regime allocation}\label{sup:S1.2}

Procedure 1 decomposed the workflow into \textbf{17 workflow units}; the workflow as a whole receives no regime.

\par\smallskip\noindent{\renewcommand{\arraystretch}{1.18}%
\begin{tabular}{@{}>{\raggedright\arraybackslash}p{\dimexpr 0.2141\linewidth-1\tabcolsep\relax}>{\raggedright\arraybackslash}p{\dimexpr 0.6897\linewidth-2\tabcolsep\relax}>{\raggedright\arraybackslash}p{\dimexpr 0.0962\linewidth-1\tabcolsep\relax}@{}}
\toprule
\textbf{Regime} & \textbf{Units} & \textbf{Count} \\
\midrule
Type A & CAD program generation & 1 \\
Type D & Meshing; deck generation and checking; finite-element (FE) solution; extraction and numerical checks; evaluation of approved criteria; engineering report assembly & 6 \\
C[adjudicate] & Geometry specification; residual CAD adequacy; formalization of acceptance criteria; case-study engineering report narrative & 4 \\
C[assist] & Discretization adequacy; load-case and idealization decisions; analysis adequacy; result interpretation; response to unresolved criteria (conditional); study-level review & 6 \\
Type B, Type E & — & 0 \\
\bottomrule
\end{tabular}}\par\medskip

All 17 participation and assurance records were \textbf{READY before execution} (S2). These counts refer only to the controlled executed set; the 22-item retrospective audit in S7 is a separate denominator.

\needspace{6\baselineskip}

\subsection{Architecture of the executed mixed workflow}\label{sup:S1.3}

The executed workflow combined AI, deterministic tools and engineering judgement as follows (Figure 5.1):

\begin{itemize}
\item \textbf{AI was used selectively.} It produced the CAD program (Type A). In C[adjudicate] units, AI-produced drafts were subject to acceptance, revision or rejection by the study author. In C[assist] units, AI supplied labelled advisory material, evidence or proposed findings, and the study author produced the judgement.
\item \textbf{The AI-generated CAD artefact entered only through deterministic Type A assurance:} a geometric gate with a bounded fallback (S3).
\item \textbf{The deterministic computational chain itself ran automatically through established tools} (S4–S5):
\begin{itemize}
\item scripted meshing at three refinement levels;
\item deck generation with an independent deck checker;
\item a pinned FE solver;
\item deterministic extraction with numerical-integrity checks;
\item deterministic evaluation of the approved criteria.
\par
Engineers did not manually repeat or re-perform this deterministic computation.
\end{itemize}
\item \textbf{Separate human-judgement units remained at the predefined adequacy, interpretation and adjudication points} (S2, S6), where the framework assigns C[adjudicate] or C[assist], including:
\begin{itemize}
\item residual CAD adequacy;
\item formalization of the acceptance criteria;
\item load-case and idealization decisions;
\item discretization and analysis adequacy;
\item result interpretation;
\item the response to unresolved criteria.
\end{itemize}
\item \textbf{Unresolved evidence was escalated, not converted into PASS.} Four stress clauses and the Q-5 cross-check were dispositioned ADJUDICATE (S5.5).
\end{itemize}

The executed workflow thus combined an automatically executed deterministic chain with explicit control by the study author at judgement points. This is an initial demonstration of the framework's structure, not an industrial productivity benchmark. No automation percentage, time saving or optimality is claimed.

\needspace{19\baselineskip}

\subsection{AI generation context (Type A unit)}\label{sup:S1.4}

\par\smallskip\noindent{\renewcommand{\arraystretch}{1.18}%
\begin{tabular}{@{}>{\raggedright\arraybackslash}p{\dimexpr 0.1758\linewidth-1\tabcolsep\relax}>{\raggedright\arraybackslash}p{\dimexpr 0.8242\linewidth-1\tabcolsep\relax}@{}}
\toprule
\textbf{Item} & \textbf{Value} \\
\midrule
Provider / interface & Anthropic model-invocation interface \\
Model & Claude Opus 5.5 (\texttt{claude-opus-5-5}); the interface does not echo a runtime model identifier \\
Context & Fresh context, no conversation history; no access to the reference program, the verifier or downstream results \\
Prompt & Frozen generation prompt, sent verbatim (prompt sha256 \texttt{779754cf…21ad6850f0}), with a text-only instruction envelope \\
Reasoning effort & The plan specified “High”; the interface did not allow this to be enforced. \textbf{The run is therefore not evidence of generator performance at a specified reasoning-effort setting.} The deterministic gate was unaffected. \\
Output & One candidate program (sha256 \texttt{1a69e827…ff72fa8f17}), generated in 39.7 s \\
\bottomrule
\end{tabular}}\par\medskip

\needspace{6\baselineskip}

\subsection{Frozen study setup and reproducibility}\label{sup:S1.5}

Before generation, a hash manifest of \textbf{61 entries} pinned the following, together with the CalculiX 2.23 solver binary (sha256 \texttt{2e9c75bb…598bb8ee}); manifest content hash \texttt{f55b65c8…49bb6173e}:

\begin{itemize}
\item the specifications: geometry, geometric formalization, load specification and material data;
\item the acceptance criteria;
\item the analysis plans and metric definitions;
\item the gate, mesher, FE tools and evaluator;
\item the environment locks.
\end{itemize}

The executed study is defined by these frozen versions and hashes.

\textbf{Post-run verification.} The contemporaneous final report records a 61/61 post-run match, but a standalone machine-readable output of that check was not preserved. A later re-hash on 2 October 2026 reproduced the 61/61 result and the solver-binary hash. This later observation is reported separately and does not replace the original execution-time evidence.

The solver analyses were executed once and not independently repeated. The evidence establishes repeatability of the implemented process, not physical validity of the results or formal tool qualification.

\needspace{32\baselineskip}

\section{Participation and assurance allocation (17 workflow units)}\label{sup:S2}

\textbf{Readiness.} All 17 records were READY before the run: every obligation in the record was satisfied with evidence of an adequate kind. Readiness is a status, not a result. The unit outcomes are reported in S5.6.

\textbf{Table S2.1 — Allocation}

The AI-role column states the AI contribution recorded in each unit's frozen review or provenance record; “None” means that no AI-produced material entered the unit's execution.

{\renewcommand{\arraystretch}{1.18}
\begin{longtable}{@{}>{\raggedright\arraybackslash}p{\dimexpr 0.1019\linewidth-1\tabcolsep\relax}>{\raggedright\arraybackslash}p{\dimexpr 0.3814\linewidth-2\tabcolsep\relax}>{\raggedright\arraybackslash}p{\dimexpr 0.3035\linewidth-2\tabcolsep\relax}>{\raggedright\arraybackslash}p{\dimexpr 0.2133\linewidth-1\tabcolsep\relax}@{}}
\toprule
\textbf{\#} & \textbf{Workflow unit} & \textbf{AI role in this execution} & \textbf{Regime} \\
\midrule
\endfirsthead
\toprule
\textbf{\#} & \textbf{Workflow unit} & \textbf{AI role in this execution} & \textbf{Regime} \\
\midrule
\endhead
\bottomrule
\endlastfoot
T1 & Geometry specification & AI-originated baseline values; adopted by the study author & C[adjudicate] \\
T2a & CAD program generation & AI generated the program & A \\
T2b & Residual CAD adequacy & AI-generated geometry under review; AI compiled evidence & C[adjudicate] \\
T3a & Meshing & None & D \\
T3b & Discretization adequacy & Advisory: AI compiled evidence and proposed findings & C[assist] \\
T4a & Deck generation and checking & None & D \\
T4b & Load cases and idealization & Advisory: AI compiled evidence & C[assist] \\
T5a & FE solution & None & D \\
T5b & Analysis adequacy & Advisory: AI compiled evidence and proposed findings & C[assist] \\
T6a & Extraction and numerical checks & None & D \\
T6b & Result interpretation & Advisory: AI compiled evidence and proposed findings & C[assist] \\
T7a & Formalization of acceptance criteria & AI-assisted draft of the criteria & C[adjudicate] \\
T7b & Evaluation of approved criteria & None & D \\
T8a & Engineering report assembly (numbers) & None & D \\
T8b & Case-study engineering report narrative & AI drafted the narrative & C[adjudicate] \\
T9 & Response to unresolved criteria (conditional) & Advisory: AI compiled evidence and proposed findings & C[assist] \\
T10 & Study-level review & Advisory: AI compiled evidence and proposed findings & C[assist] \\
\end{longtable}}

\needspace{32\baselineskip}

\textbf{Table S2.2 — Mechanism and fallback}

{\renewcommand{\arraystretch}{1.18}
\begin{longtable}{@{}>{\raggedright\arraybackslash}p{\dimexpr 0.0906\linewidth-1\tabcolsep\relax}>{\raggedright\arraybackslash}p{\dimexpr 0.4399\linewidth-2\tabcolsep\relax}>{\raggedright\arraybackslash}p{\dimexpr 0.2910\linewidth-2\tabcolsep\relax}>{\raggedright\arraybackslash}p{\dimexpr 0.1785\linewidth-1\tabcolsep\relax}@{}}
\toprule
\textbf{\#} & \textbf{Mechanism} & \textbf{Fallback (where relevant)} & \textbf{Readiness} \\
\midrule
\endfirsthead
\toprule
\textbf{\#} & \textbf{Mechanism} & \textbf{Fallback (where relevant)} & \textbf{Readiness} \\
\midrule
\endhead
\bottomrule
\endlastfoot
T1 & Study author adopts the specification & Author re-specifies geometry & READY \\
T2a & Deterministic geometric gate (S3) & $\le$ 3 attempts via bounded repair loop $\rightarrow$ engineer-authored CAD through the same gate $\rightarrow$ halt & READY \\
T2b & Study author accepts or rejects & Reject $\rightarrow$ T2a fallback path & READY \\
T3a & Scripted mesher: input lineage check, frozen quality rules, conformity check & Halt & READY \\
T3b & Study author judges adequacy & Further refinement, if decided & READY \\
T4a & Deterministic deck writer + independent deck checker & Halt & READY \\
T4b & Study author decides before analysis & — & READY \\
T5a & Pinned solver within a declared validity domain & Halt & READY \\
T5b & Study author judges adequacy & Report limitation & READY \\
T6a & Deterministic parsers; equilibrium and consistency checks & Halt & READY \\
T6b & Study author interprets & — & READY \\
T7a & Study author approves and versions the criteria before any result & Revise as a new version & READY \\
T7b & Deterministic evaluator (accepts only approved criteria and material data) & Halt & READY \\
T8a & Report assembler with numeric-token checker & Halt & READY \\
T8b & Study author accepts or revises & Revise report narrative & READY \\
T9 & Study author decides the response & — & READY \\
T10 & Study author reviews the study package & — & READY \\
\end{longtable}}

\textbf{Scope limits of the records:}

\begin{enumerate}
\item The study is author-led, not independent.
\item Verifier–generator independence is not claimed: generation and acceptance were procedurally separated only (Section~\ref{sec:5.3}), and gate soundness rests on the engineering evidence in S3.3.
\item OS-level isolation of the candidate code was not demonstrated; in-process containment monitoring served as defence in depth only.
\item The tip-link has no demonstrated finite-rotation validity domain, so Q-5 was predeclared ADJUDICATE (S5.4).
\end{enumerate}

\needspace{30\baselineskip}

\section{Type A CAD generation and deterministic gate}\label{sup:S3}

\subsection{Approved geometric formalization}\label{sup:S3.1}

The formalization has \textbf{17 clauses}:

{\renewcommand{\arraystretch}{1.18}
\begin{longtable}{@{}>{\raggedright\arraybackslash}p{\dimexpr 0.1322\linewidth-1\tabcolsep\relax}>{\raggedright\arraybackslash}p{\dimexpr 0.2209\linewidth-2\tabcolsep\relax}>{\raggedright\arraybackslash}p{\dimexpr 0.6469\linewidth-1\tabcolsep\relax}@{}}
\toprule
\textbf{Clause} & \textbf{Family} & \textbf{Condition (abridged)} \\
\midrule
\endfirsthead
\toprule
\textbf{Clause} & \textbf{Family} & \textbf{Condition (abridged)} \\
\midrule
\endhead
\bottomrule
\endlastfoot
E-1 & Execution & Exit code 0 within a 120 s budget \\
E-2 & Execution & Exactly one STEP file, with the declared name \\
U-1 & Units & STEP declares metre as its single length unit \\
U-2 & Units & After calibrated metre import, X span = 2.400 m $\pm$ 1 $\times$ 10$^{-6}$ m \\
V-1 & Validity & Fresh-process re-import yields a non-empty shape \\
V-2 & Validity & BRep validity of the compound and every face \\
T-1 & Topology & Exactly 6 faces, all planar \\
T-2 & Topology & 2 flange, 1 web and 3 stiffener faces, classified by normal; none unclassified \\
T-3 & Topology & Web has 4 circular inner wires; other faces have none \\
G-1 & Geometry & Flange planes at Z = 0.009 and 0.411 m \\
G-2 & Geometry & Flange outlines: (0, $\pm$0.100), (2.4, $\pm$0.060); straight edges \\
G-3 & Geometry & Web outline (0, 0.009)–(2.4, 0.411) \\
G-4 & Geometry & Hole centres; radius 0.060 m; axes $\parallel$ Y \\
G-5 & Geometry & Stiffeners at X = 0.6, 1.2, 1.8 m; outline \{0, 0.050\} $\times$ \{0.009, 0.411\}; one-sided +Y \\
A-1 & Area & Face areas and total area \\
B-1 & Global & Bounding box X [0, 2.4], Y [$-$0.1, 0.1], Z [0.009, 0.411] m \\
P-1 & Provenance & Hashes of program, STEP files and formalization recorded \\
\end{longtable}}

\textbf{Tolerances:}

\begin{itemize}
\item counts and classification: exact;
\item coordinates and radii: 1 $\times$ 10$^{-6}$ m;
\item parallelism: |cos| $\ge$ 0.999999999;
\item areas: 1 $\times$ 10$^{-6}$ relative.
\end{itemize}

The formalization covers specified geometric properties only. Residual adequacy is a separate C[adjudicate] unit (S6.1). The study author approved the formalization before generation.

\needspace{17\baselineskip}

\subsection{Gate checks in the admitted run}\label{sup:S3.2}

The gate executes the candidate under a fixed time budget with in-process containment monitoring, requires one metre-unit STEP file, re-imports it in a fresh process and measures \textbf{23 checks}. All 23 passed:

\par\smallskip\noindent{\renewcommand{\arraystretch}{1.18}%
\begin{tabular}{@{}>{\raggedright\arraybackslash}p{\dimexpr 0.1759\linewidth-1\tabcolsep\relax}>{\raggedright\arraybackslash}p{\dimexpr 0.5035\linewidth-2\tabcolsep\relax}>{\raggedright\arraybackslash}p{\dimexpr 0.1906\linewidth-2\tabcolsep\relax}>{\raggedright\arraybackslash}p{\dimexpr 0.1301\linewidth-1\tabcolsep\relax}@{}}
\toprule
\textbf{Clause} & \textbf{Checks} & \textbf{Tolerance} & \textbf{Result} \\
\midrule
U-1, V-1, V-2 & 3 (unit declaration, non-empty import, BRep validity) & exact & pass \\
T-1, T-2, T-3 & 3 (faces, classification, inner wires) & exact & pass \\
G-1, G-2, G-3 & 4 (flange planes, two flange outlines, web outline) & 1e-6 m & pass \\
G-4 & 3 (hole centres, radii, axes) & 1e-6 m; dot & pass \\
G-5 & 4 (stiffener planes, three outlines) & 1e-6 m & pass \\
A-1 & 4 (flange, web, stiffener, total areas) & 1e-6 rel & pass \\
B-1, U-2 & 2 (bounding box, span sentinel) & 1e-6 m & pass \\
\bottomrule
\end{tabular}}\par\medskip

The remaining clauses were also satisfied:

\begin{itemize}
\item \textbf{E-1:} candidate run time 39.2 s within the 120 s budget;
\item \textbf{E-2:} one STEP file;
\item \textbf{P-1:} hashes recorded.
\end{itemize}

No containment violation was detected. Decision: \textbf{ADMIT}.

\needspace{6\baselineskip}

\subsection{Verifier evidence}\label{sup:S3.3}

The gate's soundness is supported by engineering evidence, not by formal proof. A \textbf{32-case} test suite ran before the study, and all stated expectations were met:

\begin{itemize}
\item \textbf{25 cases with stated expectations:}
\begin{itemize}
\item 4 ADMIT: reference program, alternative construction route, two determinism runs;
\item 18 REJECT: 15 geometric mutations and 3 containment probes;
\item 3 GENERATION\_FAILED: exception, time-out, two STEP files;
\end{itemize}
\item \textbf{7 record-only cases.}
\end{itemize}

The suite was complemented by \textbf{19 logic-level tests} and a review of the gate code by the study author.

\needspace{21\baselineskip}

\textbf{Table S3.3 — Geometric mutations of the engineer-adopted reference program (all REJECT)}

\par\smallskip\noindent{\renewcommand{\arraystretch}{1.18}%
\begin{tabular}{@{}>{\raggedright\arraybackslash}p{\dimexpr 0.6167\linewidth-1\tabcolsep\relax}>{\raggedright\arraybackslash}p{\dimexpr 0.3833\linewidth-1\tabcolsep\relax}@{}}
\toprule
\textbf{Mutation} & \textbf{Clauses failed} \\
\midrule
Hole shifted +1 mm in X & G-4 \\
Hole radius 0.059 m & A-1, G-4 \\
Hole deleted & A-1, G-4, T-3 \\
Stiffener at X = 1.21 m & G-5 \\
Stiffener deleted & A-1, G-5, T-1, T-2 \\
Stiffener on $-$Y side & G-5 \\
Tip flange width 0.130 m & A-1, G-2 \\
Kinked flange, same endpoints & A-1, G-2 \\
Web extended to Z = 0.420 m & A-1, B-1, G-3 \\
Duplicated stiffener face & A-1, G-5, T-1, T-2 \\
STEP truncated after export & U-2, V-1 \\
Correct bounding box printed, hole moved & G-4 \\
Geometry scaled $\times$1000 & A-1, B-1, G-1…G-5, U-2 \\
Thickened solid web & A-1, G-1…G-5, T-1…T-3 \\
Millimetre export of metre geometry & A-1, B-1, G-1…G-5, U-1, U-2 \\
\bottomrule
\end{tabular}}\par\medskip

The containment probes (network access, writing outside the run directory, spawning a child process) were all rejected.

\needspace{6\baselineskip}

\subsection{Bounded repair loop and fallback (predeclared)}\label{sup:S3.4}

\begin{enumerate}
\item At most three sequential AI attempts.
\item Attempts 2–3 receive only the frozen prompt plus the machine-readable failed clauses.
\item After three rejections, an engineer-authored CAD program must pass the same gate.
\item Failing that, the unit halts and the halt is recorded.
\end{enumerate}

\needspace{6\baselineskip}

\subsection{Outcome}\label{sup:S3.5}

The first candidate was \textbf{admitted at the first attempt, with 23/23 checks passed}, and was \textbf{used unmodified}. The repair loop and the fallback were not used.

\emph{Run-order note.} An earlier gate invocation failed before the candidate executed, because of an operator-side path error. It is retained as an invalid run; it did not alter the candidate and consumed no generation attempt.

\needspace{6\baselineskip}

\section{Deterministic downstream workflow}\label{sup:S4}

After admission, no further AI-generated engineering artefact entered the computational chain. Each step below is Type D, and its recorded tool output is the evidence.

\needspace{12\baselineskip}

\subsection{Meshing and refinement}\label{sup:S4.1}

The mesher was a scripted Gmsh 4.15.2 driver. It meshed only the gate-admitted geometry, verified by a lineage check, using homogeneous quadratic triangular shell elements (CalculiX S6 elements) at three refinement levels.

\par\smallskip\noindent{\renewcommand{\arraystretch}{1.18}%
\begin{tabular}{@{}>{\raggedright\arraybackslash}p{\dimexpr 0.1750\linewidth-1\tabcolsep\relax}>{\raggedright\arraybackslash}p{\dimexpr 0.1852\linewidth-2\tabcolsep\relax}>{\raggedright\arraybackslash}p{\dimexpr 0.1848\linewidth-2\tabcolsep\relax}>{\raggedright\arraybackslash}p{\dimexpr 0.1790\linewidth-2\tabcolsep\relax}>{\raggedright\arraybackslash}p{\dimexpr 0.1402\linewidth-2\tabcolsep\relax}>{\raggedright\arraybackslash}p{\dimexpr 0.1358\linewidth-1\tabcolsep\relax}@{}}
\toprule
\textbf{Level} & \textbf{Elements} & \textbf{Quality rules R1–R10} & \textbf{Geometry conformity} & \textbf{min SJ} & \textbf{max aspect} \\
\midrule
Coarse & 3,218 & all pass & pass & 0.775 & 2.22 \\
Medium & 12,152 & all pass & pass & 0.881 & 1.66 \\
Fine & 47,046 & all pass & pass & 0.942 & 1.68 \\
\bottomrule
\end{tabular}}\par\medskip

min SJ: minimum scaled Jacobian; R1–R10: the ten frozen project mesh-quality rules.

Geometry conformity was checked in the CAD kernel, independently of the mesher; the maximum distance was 8.9 $\times$ 10$^{-16}$ m. Repeated meshing on the same host gave identical canonical hashes at every level.

\needspace{6\baselineskip}

\subsection{Solver input decks}\label{sup:S4.2}

A deterministic deck writer produced \textbf{21 decks}:

\begin{itemize}
\item 12 linear static: 3 levels $\times$ 4 load cases;
\item 6 linear buckling: 3 levels $\times$ 2 senses;
\item 2 doubled-load buckling scaling checks (fine mesh);
\item 1 NLGEOM run (fine mesh, positive ultimate load).
\end{itemize}

An independent deterministic checker verified every deck (33–35 checks each). Among other things, it reconstructs the rigid tip load-introduction constraints and confirms that loads and supports are complete. \textbf{Result: 21/21 decks passed, and every deck regenerated byte-identically.} The solver ran only after all decks had passed.

\needspace{6\baselineskip}

\subsection{Solver}\label{sup:S4.3}

CalculiX 2.23 was run as a hash-pinned binary. \textbf{Result: 21/21 jobs completed, with no errors or warnings recorded.} The tool path is supported by the solver's official test suite run on this binary and by project fixtures for bending, buckling and the tip link.

\needspace{14\baselineskip}

\subsection{Extraction and numerical integrity}\label{sup:S4.4}

Results were extracted deterministically from the solver's printed output. Equilibrium is checked in a precision-aware way: each printed value is treated as an interval set by its printed precision. The tolerances are 1 $\times$ 10$^{-6}$ for force and 1 $\times$ 10$^{-4}$ for moment, relative to the applied load.

\par\smallskip\noindent{\renewcommand{\arraystretch}{1.18}%
\begin{tabular}{@{}>{\raggedright\arraybackslash}p{\dimexpr 0.2610\linewidth-1\tabcolsep\relax}>{\raggedright\arraybackslash}p{\dimexpr 0.2416\linewidth-2\tabcolsep\relax}>{\raggedright\arraybackslash}p{\dimexpr 0.2843\linewidth-2\tabcolsep\relax}>{\raggedright\arraybackslash}p{\dimexpr 0.2130\linewidth-1\tabcolsep\relax}@{}}
\toprule
\textbf{Jobs} & \textbf{Force equilibrium} & \textbf{Moment equilibrium} & \textbf{Output consistency} \\
\midrule
12 linear static jobs (3 levels $\times$ 4 load cases) & 12/12 accepted (error $\le$ 5.7 $\times$ 10$^{-12}$) & 12/12 accepted (error $\le$ 6.3 $\times$ 10$^{-7}$) & all conditions satisfied \\
NLGEOM (fine, positive ultimate) & accepted (2.5 $\times$ 10$^{-13}$) & \textbf{not met}: 2.01 $\times$ 10$^{-4}$; diagnostic only, see S5.4 & all conditions satisfied \\
\bottomrule
\end{tabular}}\par\medskip

The fine NLGEOM output files (over 1 GB) were too large for the planned whole-file parsers. The unchanged parsers were applied to byte-exact excerpts of the final increment. On a fixture, the excerpt parse was shown to equal the full-file parse.

\needspace{6\baselineskip}

\subsection{Forward-carrying controls}\label{sup:S4.5}

Three controls carry the admission decision through the chain:

\begin{itemize}
\item the mesher accepts only the gate-admitted geometry;
\item the solver runs only after every deck passes the independent checker;
\item the evaluator accepts only the approved criteria and material data.
\end{itemize}

These controls are independent of the geometry's source by design, although the single run did not test this.

\needspace{21\baselineskip}

\section{Engineering criteria, convergence and dispositions}\label{sup:S5}

\subsection{Frozen criteria and material data}\label{sup:S5.1}

The criteria were approved by the study author and frozen before any result existed. They are project-defined comparison criteria; they are \textbf{not} a CS-25 compliance finding.

\par\smallskip\noindent{\renewcommand{\arraystretch}{1.18}%
\begin{tabular}{@{}>{\raggedright\arraybackslash}p{\dimexpr 0.1417\linewidth-1\tabcolsep\relax}>{\raggedright\arraybackslash}p{\dimexpr 0.2851\linewidth-2\tabcolsep\relax}>{\raggedright\arraybackslash}p{\dimexpr 0.5732\linewidth-1\tabcolsep\relax}@{}}
\toprule
\textbf{Clause} & \textbf{Load level} & \textbf{Criterion} \\
\midrule
Q-1+ / Q-1$-$ & Limit, positive (1.0) / negative ($-$0.4) & Converged $\sigma_{proj,g}$ $\le$ $F_{ty,g}$ for each group g \\
Q-2+ / Q-2$-$ & Ultimate, positive (1.5) / negative ($-$0.6) & Converged $\sigma_{proj,g}$ $\le$ $F_{ty,g}$ (elastic-through-ultimate project criterion) \\
Q-3+ / Q-3$-$ & Buckling, positive / negative (fine mesh) & Converged smallest positive buckling factor $\lambda$ $\ge$ 1.5 \\
Q-4 & All levels & Tip deflection and $\delta$/L reported; no threshold \\
Q-5 & 1.5 $\times$ positive limit load (fine) & Linear vs NLGEOM relative difference $\le$ 0.05; escalation if $\delta$/L > 0.01 or |$\theta$| > 0.05 rad \\
\bottomrule
\end{tabular}}\par\medskip

\textbf{Stress measure.} Von Mises stress is computed from unaveraged integration-point stresses extrapolated to the element corners on both surfaces. The group maximum (rule S1 of the frozen engineering plans) is the largest such value in a component group. $\sigma_{proj}$ (rule S3 of the same plans) is taken outside a predeclared root exclusion zone ($x_{ex}$ = 0.20 m) and is bounded below by the nominal beam-theory reference.

\textbf{Allowables.} MIL-HDBK-5J (2003), Table 3.7.6.0(b1), 7075-T651 plate, A-basis. For the 18 mm flange (0.709 in; 0.500–1.000 in thickness range), the minimum of the L and LT $F_{ty}$ values is 68 ksi $\approx$ \textbf{469 MPa}. For the 8 mm web and stiffeners (0.315 in; 0.250–0.499 in range), the minimum is 67 ksi $\approx$ \textbf{462 MPa}. $F_{tu}$ is context only.

\needspace{6\baselineskip}

\subsection{Convergence rule}\label{sup:S5.2}

Convergence of each quantity $q$ is judged by its relative medium-to-fine change
\[
c(q)=\frac{|q_{fine}-q_{medium}|}{\max(|q_{fine}|,\ q_{floor})},
\]
where $q_{floor}$ is a frozen, quantity-specific floor that prevents division by near-zero values: $10^{-6}$ m for displacement, $10^{6}$ Pa for stress and $10^{-3}$ for buckling factors. In every reported case $|q_{fine}|$ exceeds its floor (for stresses by a factor of at least about 30; for displacements and buckling factors by several orders of magnitude), so the floor did not affect any reported value.

Thresholds were frozen before the run:

\begin{itemize}
\item tip displacement: c $\le$ 0.01;
\item buckling factors: c $\le$ 0.01;
\item stress metrics: c $\le$ 0.05.
\end{itemize}

A quantity that fails is \textbf{NOT\_CONVERGED}, and the threshold is never relaxed.

\emph{Worked example (flange $\sigma_{proj}$, positive ultimate):} c = |246.54 $-$ 182.60| / 246.54 = 0.259 > 0.05, so NOT\_CONVERGED.

\needspace{15\baselineskip}

\subsection{Results}\label{sup:S5.3}

The relative changes agree across load cases to within rounding (tip displacement 9.1–9.2e-5; stresses identical to the precision shown), consistent with the linear analysis.

\textbf{Table S5.3a — Tip displacement (mm)}

\par\smallskip\noindent{\renewcommand{\arraystretch}{1.18}%
\begin{tabular}{@{}>{\raggedright\arraybackslash}p{\dimexpr 0.1813\linewidth-1\tabcolsep\relax}>{\raggedright\arraybackslash}p{\dimexpr 0.1546\linewidth-2\tabcolsep\relax}>{\raggedright\arraybackslash}p{\dimexpr 0.1546\linewidth-2\tabcolsep\relax}>{\raggedright\arraybackslash}p{\dimexpr 0.1546\linewidth-2\tabcolsep\relax}>{\raggedright\arraybackslash}p{\dimexpr 0.1162\linewidth-2\tabcolsep\relax}>{\raggedright\arraybackslash}p{\dimexpr 0.2388\linewidth-1\tabcolsep\relax}@{}}
\toprule
\textbf{Load case} & \textbf{Coarse} & \textbf{Medium} & \textbf{Fine} & \textbf{c} & \textbf{Status} \\
\midrule
Limit + (1.0) & 20.8968 & 20.9006 & 20.9025 & 9.1e-5 & CONVERGED \\
Ultimate + (1.5) & 31.3452 & 31.3510 & 31.3538 & 9.1e-5 & CONVERGED \\
Limit $-$ ($-$0.4) & $-$8.3587 & $-$8.3603 & $-$8.3610 & 9.1e-5 & CONVERGED \\
Ultimate $-$ ($-$0.6) & $-$12.5381 & $-$12.5404 & $-$12.5415 & 9.2e-5 & CONVERGED \\
\bottomrule
\end{tabular}}\par\medskip

\needspace{8\baselineskip}

\textbf{Table S5.3b — Smallest positive buckling factor}

\par\smallskip\noindent{\renewcommand{\arraystretch}{1.18}%
\begin{tabular}{@{}>{\raggedright\arraybackslash}p{\dimexpr 0.1541\linewidth-1\tabcolsep\relax}>{\raggedright\arraybackslash}p{\dimexpr 0.1582\linewidth-2\tabcolsep\relax}>{\raggedright\arraybackslash}p{\dimexpr 0.1616\linewidth-2\tabcolsep\relax}>{\raggedright\arraybackslash}p{\dimexpr 0.1616\linewidth-2\tabcolsep\relax}>{\raggedright\arraybackslash}p{\dimexpr 0.1191\linewidth-2\tabcolsep\relax}>{\raggedright\arraybackslash}p{\dimexpr 0.2455\linewidth-1\tabcolsep\relax}@{}}
\toprule
\textbf{Sense} & \textbf{Coarse} & \textbf{Medium} & \textbf{Fine} & \textbf{c} & \textbf{Status} \\
\midrule
Positive & 4.852521 & 4.839756 & 4.834881 & 1.0e-3 & CONVERGED \\
Negative & 12.1364 & 12.09921 & 12.08619 & 1.1e-3 & CONVERGED \\
\bottomrule
\end{tabular}}\par\medskip

\needspace{17\baselineskip}

\textbf{Table S5.3c — Flange stress (MPa)}

The raw maximum lies at the clamped root corner and is reported separately as a non-converged local peak; it is not removed.

\par\smallskip\noindent{\renewcommand{\arraystretch}{1.18}%
\begin{tabular}{@{}>{\raggedright\arraybackslash}p{\dimexpr 0.1361\linewidth-1\tabcolsep\relax}>{\raggedright\arraybackslash}p{\dimexpr 0.1525\linewidth-2\tabcolsep\relax}>{\raggedright\arraybackslash}p{\dimexpr 0.1078\linewidth-2\tabcolsep\relax}>{\raggedright\arraybackslash}p{\dimexpr 0.1250\linewidth-2\tabcolsep\relax}>{\raggedright\arraybackslash}p{\dimexpr 0.0960\linewidth-2\tabcolsep\relax}>{\raggedright\arraybackslash}p{\dimexpr 0.0975\linewidth-2\tabcolsep\relax}>{\raggedright\arraybackslash}p{\dimexpr 0.2851\linewidth-1\tabcolsep\relax}@{}}
\toprule
\textbf{Load case} & \textbf{Quantity} & \textbf{Coarse} & \textbf{Medium} & \textbf{Fine} & \textbf{c} & \textbf{Status} \\
\midrule
Limit + & raw maximum & 170.1 & 199.5 & 226.8 & 0.120 & reported separately \\
Limit + & $\sigma_{proj}$ & — & 121.7 & 164.4 & 0.259 & NOT\_CONVERGED \\
Ultimate + & raw maximum & 255.1 & 299.3 & 340.2 & 0.120 & reported separately \\
Ultimate + & $\sigma_{proj}$ & — & 182.6 & 246.5 & 0.259 & NOT\_CONVERGED \\
Limit $-$ & raw maximum & 68.0 & 79.8 & 90.7 & 0.120 & reported separately \\
Limit $-$ & $\sigma_{proj}$ & — & 48.7 & 65.7 & 0.259 & NOT\_CONVERGED \\
Ultimate $-$ & raw maximum & 102.0 & 119.7 & 136.1 & 0.120 & reported separately \\
Ultimate $-$ & $\sigma_{proj}$ & — & 73.0 & 98.6 & 0.259 & NOT\_CONVERGED \\
\bottomrule
\end{tabular}}\par\medskip

\needspace{10\baselineskip}

\textbf{Table S5.3d — Web and stiffener group-maximum stress (rule S1; MPa)}

\par\smallskip\noindent{\renewcommand{\arraystretch}{1.18}%
\begin{tabular}{@{}>{\raggedright\arraybackslash}p{\dimexpr 0.1852\linewidth-1\tabcolsep\relax}>{\raggedright\arraybackslash}p{\dimexpr 0.2679\linewidth-2\tabcolsep\relax}>{\raggedright\arraybackslash}p{\dimexpr 0.1281\linewidth-2\tabcolsep\relax}>{\raggedright\arraybackslash}p{\dimexpr 0.2593\linewidth-2\tabcolsep\relax}>{\raggedright\arraybackslash}p{\dimexpr 0.1595\linewidth-1\tabcolsep\relax}@{}}
\toprule
\textbf{Load case} & \textbf{Web (coarse / medium / fine)} & \textbf{Web c} & \textbf{Stiffener (coarse / medium / fine)} & \textbf{Stiffener c} \\
\midrule
Limit + & 119.1 / 118.4 / 122.6 & 0.035 & 51.8 / 65.6 / 78.5 & 0.164 \\
Ultimate + & 178.7 / 177.5 / 183.9 & 0.035 & 77.6 / 98.4 / 117.8 & 0.164 \\
Limit $-$ & 47.7 / 47.3 / 49.1 & 0.035 & 20.7 / 26.2 / 31.4 & 0.164 \\
Ultimate $-$ & 71.5 / 71.0 / 73.6 & 0.035 & 31.1 / 39.4 / 47.1 & 0.164 \\
\bottomrule
\end{tabular}}\par\medskip

Web: CONVERGED. Stiffener: NOT\_CONVERGED; its peak lies at a stiffener–flange junction, and junctions are included in the metric by design.

\textbf{Nominal root references:} flange 114.2 / 171.3 / 45.7 / 68.5 MPa, and web 34.1 / 51.2 / 13.6 / 20.5 MPa (Limit + / Ultimate + / Limit $-$ / Ultimate $-$).

\needspace{14\baselineskip}

\subsection{Clause verdicts}\label{sup:S5.4}

\textbf{Table S5.4a — Stress clauses}

The ratio $\sigma_{proj}$ / $F_{ty}$ is shown for information only. A non-converged metric cannot yield PASS.

\par\smallskip\noindent{\renewcommand{\arraystretch}{1.18}%
\begin{tabular}{@{}>{\raggedright\arraybackslash}p{\dimexpr 0.1423\linewidth-1\tabcolsep\relax}>{\raggedright\arraybackslash}p{\dimexpr 0.1453\linewidth-2\tabcolsep\relax}>{\raggedright\arraybackslash}p{\dimexpr 0.1316\linewidth-2\tabcolsep\relax}>{\raggedright\arraybackslash}p{\dimexpr 0.1631\linewidth-2\tabcolsep\relax}>{\raggedright\arraybackslash}p{\dimexpr 0.4177\linewidth-1\tabcolsep\relax}@{}}
\toprule
\textbf{Clause} & \textbf{Flange} & \textbf{Web} & \textbf{Stiffener} & \textbf{Evaluator verdict} \\
\midrule
Q-1+ & 0.350 & 0.265 & 0.170 & NOT\_CONVERGED \\
Q-1$-$ & 0.140 & 0.106 & 0.068 & NOT\_CONVERGED \\
Q-2+ & 0.526 & 0.398 & 0.255 & NOT\_CONVERGED \\
Q-2$-$ & 0.210 & 0.159 & 0.102 & NOT\_CONVERGED \\
\bottomrule
\end{tabular}}\par\medskip

\needspace{11\baselineskip}

\textbf{Table S5.4b — Other clauses}

\par\smallskip\noindent{\renewcommand{\arraystretch}{1.18}%
\begin{tabular}{@{}>{\raggedright\arraybackslash}p{\dimexpr 0.1329\linewidth-1\tabcolsep\relax}>{\raggedright\arraybackslash}p{\dimexpr 0.6295\linewidth-2\tabcolsep\relax}>{\raggedright\arraybackslash}p{\dimexpr 0.2377\linewidth-1\tabcolsep\relax}@{}}
\toprule
\textbf{Clause} & \textbf{Values} & \textbf{Evaluator verdict} \\
\midrule
Q-3+ & $\lambda$ = 4.834881 $\ge$ 1.5; run complete; converged & PASS \\
Q-3$-$ & $\lambda$ = 12.08619 $\ge$ 1.5; run complete; converged & PASS \\
Q-4 & $\delta$/L = 0.0087, 0.0131, $-$0.0035, $-$0.0052 (Limit +, Ultimate +, Limit $-$, Ultimate $-$) & REPORTED \\
Q-5 & r($\delta_{tip}$) 2.4e-4; r($\sigma$) flange 0.0196, web 0.0075, stiffener 3.4e-4 & SUPPORTED \\
\bottomrule
\end{tabular}}\par\medskip

\textbf{Buckling diagnostics (not a verdict condition).}

\begin{itemize}
\item The doubled-load scaling check raised no anomaly (e = 3.2 $\times$ 10$^{-7}$ and 4.9 $\times$ 10$^{-7}$).
\item Each fine-mesh buckling run returned 20 positive factors.
\end{itemize}

\textbf{Q-5.} Before any analysis, Q-5 was declared to require adjudication: the finite-rotation validity of the linearized tip-load relation had not been established. The results support that caution:

\begin{itemize}
\item $\delta$/L at ultimate load = 0.0131, above the 0.01 escalation threshold;
\item the NLGEOM run showed a moment-equilibrium deficit (2.01 $\times$ 10$^{-4}$) consistent with the known limitation of the linearized link (S6.3).
\end{itemize}

\needspace{11\baselineskip}

\subsection{Final dispositions}\label{sup:S5.5}

\par\smallskip\noindent{\renewcommand{\arraystretch}{1.18}%
\begin{tabular}{@{}>{\raggedright\arraybackslash}p{\dimexpr 0.1628\linewidth-1\tabcolsep\relax}>{\raggedright\arraybackslash}p{\dimexpr 0.4390\linewidth-2\tabcolsep\relax}>{\raggedright\arraybackslash}p{\dimexpr 0.3982\linewidth-1\tabcolsep\relax}@{}}
\toprule
\textbf{Clause} & \textbf{Evaluator verdict} & \textbf{Final disposition} \\
\midrule
Q-1$\pm$ & NOT\_CONVERGED & ADJUDICATE \\
Q-2$\pm$ & NOT\_CONVERGED & ADJUDICATE \\
Q-3$\pm$ & PASS & PASS \\
Q-4 & REPORTED & Not applicable \\
Q-5 & SUPPORTED & ADJUDICATE (predeclared) \\
\bottomrule
\end{tabular}}\par\medskip

Favourable stress ratios (at most 0.526) did \textbf{not} yield PASS, because the required convergence evidence was absent. Those clauses received an ADJUDICATE disposition, and no threshold or criterion was changed.

\needspace{13\baselineskip}

\subsection{Unit outcomes: READY is not PASS}\label{sup:S5.6}

Unit outcomes are study-level outcome labels, distinct from clause verdicts, deployment readiness and the regime guarantees. Over the 17 units, all READY before the run:

\par\smallskip\noindent{\renewcommand{\arraystretch}{1.18}%
\begin{tabular}{@{}>{\raggedright\arraybackslash}p{\dimexpr 0.4319\linewidth-1\tabcolsep\relax}>{\raggedright\arraybackslash}p{\dimexpr 0.1182\linewidth-2\tabcolsep\relax}>{\raggedright\arraybackslash}p{\dimexpr 0.4499\linewidth-1\tabcolsep\relax}@{}}
\toprule
\textbf{Outcome} & \textbf{Count} & \textbf{Units} \\
\midrule
PASS & 12 & T1, T2a, T2b, T3a, T4a, T5a, T6a, T6b, T7b, T8a, T8b, T10 \\
ADJUDICATE & 3 & T3b, T5b, T9 \\
Not applicable (pre-run decisions) & 2 & T4b, T7a \\
FAIL & 0 & — \\
\bottomrule
\end{tabular}}\par\medskip

The evaluation unit (T7b) recorded PASS because it executed the approved criteria as specified, even though several of the dispositions it produced were ADJUDICATE.

\needspace{33\baselineskip}

\section{Human judgement and additional diagnostics}\label{sup:S6}

\subsection{Engineering judgements made by the study author}\label{sup:S6.1}

{\renewcommand{\arraystretch}{1.18}
\begin{longtable}{@{}>{\raggedright\arraybackslash}p{\dimexpr 0.2156\linewidth-1\tabcolsep\relax}>{\raggedright\arraybackslash}p{\dimexpr 0.1719\linewidth-2\tabcolsep\relax}>{\raggedright\arraybackslash}p{\dimexpr 0.6125\linewidth-1\tabcolsep\relax}@{}}
\toprule
\textbf{Judgement} & \textbf{Regime} & \textbf{Decision} \\
\midrule
\endfirsthead
\toprule
\textbf{Judgement} & \textbf{Regime} & \textbf{Decision} \\
\midrule
\endhead
\bottomrule
\endlastfoot
Residual CAD adequacy & C[adjudicate] & The study author judged the admitted geometry adequate for the study and accepted it with limitations. Defensive code branches unnecessary in the pinned environment were recorded rather than edited, because the predeclared protocol allowed modification only through the repair loop. \\
Formalization of acceptance criteria & C[adjudicate] & The study author approved the criteria (S5.1) before any result. \\
Discretization adequacy & C[assist] & The study author accepted the three-level study with limitations. Tip displacement, buckling factors and the web stress converged; flange $\sigma_{proj}$ and the stiffener did not. No further refinement was made within this study. \\
Analysis adequacy and Q-5 & C[assist] & The study author kept Q-5 at ADJUDICATE per the predeclared rule and judged the NLGEOM moment deficit consistent with $\theta$$^{2}$/2 (S6.3). It was not used as release-supporting evidence. \\
Result interpretation & C[assist] & The study author identified the first buckling mode in both senses as a global lateral-torsional mode of the laterally unrestrained cantilever, so the factors depend on the idealization of tip and lateral restraint. \\
Response to unresolved stress clauses & C[assist] & Recorded as a limitation, with no design change. Model refinement was deferred to a new versioned iteration, and no criterion was changed. \\
Study-level review & C[assist] & The demonstration was accepted with limitations as a research demonstration, not a certification. \\
\end{longtable}}

\needspace{13\baselineskip}

\subsection{Post-hoc diagnostics (not verdict inputs)}\label{sup:S6.2}

These were computed \textbf{after} the verdicts were recorded, at positive ultimate load. They were not passed to the evaluator, changed no verdict, and led to no change in any criterion, formalization, metric, tolerance or exclusion zone.

\par\smallskip\noindent{\renewcommand{\arraystretch}{1.18}%
\begin{tabular}{@{}>{\raggedright\arraybackslash}p{\dimexpr 0.5206\linewidth-1\tabcolsep\relax}>{\raggedright\arraybackslash}p{\dimexpr 0.1933\linewidth-2\tabcolsep\relax}>{\raggedright\arraybackslash}p{\dimexpr 0.1642\linewidth-2\tabcolsep\relax}>{\raggedright\arraybackslash}p{\dimexpr 0.1219\linewidth-1\tabcolsep\relax}@{}}
\toprule
\textbf{Diagnostic} & \textbf{Medium} & \textbf{Fine} & \textbf{c} \\
\midrule
Interior flange, 0.20 m $\le$ x $\le$ 2.20 m & 179.4 MPa & 179.7 MPa & 0.2 \% \\
Stiffener, excluding elements within 0.05 m of the flanges & 41.9 MPa & 48.2 MPa & 13.0 \% \\
\bottomrule
\end{tabular}}\par\medskip

The flange result suggests that the flange non-convergence is confined to the end regions: the root clamp and the tip load-introduction corner. The stiffener remains unresolved. These diagnostics help explain the outcome; they do not establish compliance.

\needspace{6\baselineskip}

\subsection{Finite-rotation behaviour of the tip link}\label{sup:S6.3}

At positive ultimate load the tip rotation is $\theta$ $\approx$ 0.0202 rad. The NLGEOM moment deficit (2.01 $\times$ 10$^{-4}$) matches $\theta$$^{2}$/2 = 2.05 $\times$ 10$^{-4}$. This is the deficit of a linearized rigid-link moment under finite rotation, consistent with the predeclared limitation that the link has no demonstrated finite-rotation validity domain.

\needspace{6\baselineskip}

\subsection{Formalization limitation observed in execution}\label{sup:S6.4}

The predeclared stress rule represented the root-clamp region but not the tip load-introduction region revealed by the run. This was recorded as a limitation. Any refinement would be a new version of the formalization, fixed before a future rerun, not adopted retrospectively.

\needspace{6\baselineskip}

\section{Retrospective framework audit of the original case-study record}\label{sup:S7}

Before the controlled execution, the framework was applied retrospectively to the \textbf{original} wing-spar case-study record (the record of an earlier execution of the same analysis, from April 2026, made before the controlled protocol was defined), using only the implementation and evidence that actually existed. The retrospective audit identified \textbf{22 auditable items} in the original case-study record.

\textbf{This 22-item retrospective audit is not interchangeable with the 17-workflow-unit controlled execution (S1–S6).} The decomposition, the evidence base and the execution differ, so the two sets must not be pooled or compared as rates.

\needspace{21\baselineskip}

\textbf{Table S7.1 — Retrospective regime allocation (22 audited items)}

\par\smallskip\noindent{\renewcommand{\arraystretch}{1.18}%
\begin{tabular}{@{}>{\raggedright\arraybackslash}p{\dimexpr 0.2576\linewidth-1\tabcolsep\relax}>{\raggedright\arraybackslash}p{\dimexpr 0.7424\linewidth-1\tabcolsep\relax}@{}}
\toprule
\textbf{Activity} & \textbf{Audited items and retrospective regime} \\
\midrule
Geometry definition & T1a numeric specification: C[adjudicate]; T1b concept and sizing: C[adjudicate] \\
CAD generation & T2a CAD program generation: C[adjudicate]; T2b residual CAD adequacy: C[adjudicate] \\
Meshing & T3a mesh generation: C[adjudicate]; T3b discretization adequacy: C[assist] \\
Loads and boundary conditions & T4a card values: C[adjudicate]; T4b load cases and idealization: C[assist] \\
FE solution & T5a solver execution: D; T5b analysis adequacy: C[assist] \\
Result extraction & T6a eigenvalue/load extraction: C[adjudicate]; T6b peak extraction: C[adjudicate]; T6c interpretation: C[assist] \\
Compliance & T7a formalization: C[adjudicate]; T7b evaluation: D \\
Engineering report & T8a numerical consistency: C[adjudicate]; T8b report narrative: C[adjudicate] \\
Design modification & T9a option estimates: C[adjudicate]; T9b decision: C[assist]; T9c criterion change: C[adjudicate] \\
Final approval & T10a numeric restatement: C[adjudicate]; T10b final authorization: C[assist] \\
\bottomrule
\end{tabular}}\par\medskip

\textbf{Tally:} C[adjudicate] \textbf{14}, C[assist] \textbf{6}, D \textbf{2}, A \textbf{0}, B \textbf{0}, E \textbf{0}. \textbf{Readiness: 0/22.}

\textbf{What the audit shows.} Checks that the original case-study record had presented as formal or statistical verification were, on the surviving evidence, unrecorded human adjudication. The framework classified them accordingly rather than forcing Type A or Type B. Several audited items have a Type A or D \textbf{target regime}, for example:

\begin{itemize}
\item an implemented geometric verifier for CAD generation;
\item a validated parser for result extraction;
\item a deterministic numerical cross-check for report numbers.
\end{itemize}

These targets motivated the controlled execution.

\textbf{What NOT READY means here.} NOT READY means that the retrospective evidence package does not satisfy every obligation of the assigned record: identified authorization, AI provenance, approved formalizations or tool validation evidence (where an implemented verifier was missing, this determined the classification and a Type A target regime, not readiness). It does \textbf{not} mean that the engineering work was unsafe or wrong, or that a deterministic tool was unreliable. For example, the original solver run completed with machine-precision equilibrium, yet its record is NOT READY because validation evidence and a declared validity domain were not cited.

\end{document}